\newcommand{\ie}{i.\,e.\xspace}
\newcommand{\eg}{e.\,g.\xspace}
\newcommand{\qdga}{QD-GA\xspace}
\newcommand{\qdea}{QD-EA\xspace}
\newcommand{\QDGA}{QD-GA\xspace}
\newcommand{\ignore}[1]{}
\newtheorem{lemma}{Lemma}
\newtheorem{corollary}{Corollary}
\newtheorem{theorem}{Theorem}
\title{Theoretical Analysis of Quality Diversity Algorithms for a Classical Path Planning Problem}
\author{
Duc-Cuong Dang\\
Algorithms for Intelligent Systems\\
University of Passau\\
Passau, Germany
\And
Aneta Neumann\\
Optimisation and Logistics\\
School of Computer and Mathematical Sciences\\
The University of Adelaide\\
Adelaide, Australia
\And
Frank Neumann\\
Optimisation and Logistics\\
School of Computer and Mathematical Sciences\\
The University of Adelaide\\
Adelaide, Australia
\And
Andre Opris\\
Algorithms for Intelligent Systems\\
University of Passau\\
Passau, Germany
\And
Dirk Sudholt\\
Algorithms for Intelligent Systems\\
University of Passau\\
Passau, Germany
}
\begin{document}

\maketitle
\begin{abstract}
  Quality diversity (QD) algorithms have shown to provide sets of high quality solutions for challenging problems in robotics, games, and combinatorial optimisation. So far, theoretical foundational explaining their good behaviour in practice lack far behind their practical success. We contribute to the theoretical understanding of these algorithms and study the behaviour of QD algorithms for a classical planning problem seeking several solutions. 
We study the all-pairs-shortest-paths (APSP) problem which gives a natural formulation of the behavioural space based on all pairs of nodes of the given input graph that can be used by Map-Elites QD algorithms.
Our results show that Map-Elites QD algorithms are able to compute a shortest path for each pair of nodes efficiently in parallel. Furthermore, we examine parent selection techniques for crossover that exhibit significant speed ups compared to the standard QD approach.
\end{abstract}

\section{Introduction}

In recent years, computing diverse sets of high quality solutions for combinatorial optimisation problems has gained significant attention in the area of artificial intelligence  
from both
theoretical~\citep{DBLP:journals/ai/BasteFJMOPR22,DBLP:journals/algorithms/BasteJMPR19,DBLP:journals/algorithmica/FominGJPS24,DBLP:conf/aaai/HanakaK0KKO23} 
and experimental~\citep{DBLP:conf/icrai/VonasekS18,DBLP:conf/aaai/IngmarBST20} 
perspectives. 
Prominent examples where diverse sets of high quality solutions are sought come from the area of path 
planning~\citep{DBLP:conf/aaai/HanakaKKO21,DBLP:conf/latin/00010STTY22}.
Particularly, quality diversity (QD) algorithms have shown to 
produce excellent results for challenging problems in the areas such as robotics~\citep{DBLP:conf/icra/MiaoZSZZHYW22,DBLP:conf/ijcai/ShenZHMCFL20}, games~\citep{DBLP:journals/tec/CullyD18} and combinatorial optimisation~\citep{DBLP:journals/telo/NikfarjamNN24}.

This work contributes to the theoretical understanding of QD algorithms. 
Such algorithms compute several solutions that occupy different areas of a so-called behavioural space. 
Approaches that use a multidimensional archive of phenotypic elites, called Map-Elites~\citep{DBLP:journals/corr/MouretC15}, are among the most commonly used QD algorithms. 
We present the first runtime analysis of Map-Elites QD algorithms for a classical 
path planning 
problem, namely the all-pairs shortest-path (APSP) problem, that seeks to find a set of 
shortest paths for all possible source-destination pairs in a graph. 
From a QD perspective, it is natural to interpret the behavioural space for this problem as the set of all source-destination pairs. 
This makes the problem an ideal example for a theoretical study of Map-Elites approaches. Our aim is being able to provide insights into their working principles.

\subsection{Related work} 
%
%
Evolutionary computation methods provide a flexible way of generating diverse sets of high-quality solutions by directly incorporating diversity measures into the population. Generating diverse and high-quality solution sets has gained significant interest in evolutionary computation, particularly under the notion of evolutionary diversity optimisation (EDO)~\citep{DBLP:conf/gecco/GaoN14,DBLP:journals/ec/GaoNN21,DBLP:conf/gecco/NeumannGDN018} and quality diversity (QD)~\citep{DBLP:conf/gecco/LehmanS11,DBLP:journals/corr/MouretC15,DBLP:conf/ppsn/HaggAB18}.

In the context of combinatorial optimisation, EDO methods have been developed to compute sets of problem instances~\citep{DBLP:conf/gecco/NeumannGDN018,DBLP:conf/gecco/NeumannG0019,DBLP:journals/ec/GaoNN21} that are capable of distinguishing 
two algorithms by their performances. 
More recently, this approach has been applied to compute diverse sets of high-quality solutions for combinatorial optimisation problems such as 
  the knapsack problem~\citep{DBLP:conf/gecco/BossekN021}, 
  the traveling salesperson problem~\citep{DBLP:conf/gecco/DoBN020,DBLP:conf/gecco/NikfarjamBN021}, 
  the computation of minimum spanning trees~\citep{DBLP:conf/gecco/Bossek021}, 
  the maximum matching problem~\citep{PPSN2024_EDO_MMP}, 
  communication problems in the area of defence~\citep{DBLP:conf/gecco/NeumannGYSCG023}, 
  and constrained monotone submodular functions~\citep{DBLP:conf/gecco/NeumannB021}.

EDO algorithms maintain a fixed population size and focus on maximising diversity based on a specified diversity metric by ensuring that all solutions meet a given quality criteria. In contrast, QD algorithms typically utilise a variable population size to identify optimal solutions across different niches within a specified behavioural space. Both approaches have initially been applied to design problems~\citep{DBLP:conf/ppsn/HaggAB18,DBLP:conf/gecco/NeumannG0019}. 
In fields such as robotics and gaming, several variants of QD algorithms have been developed~\citep{DBLP:conf/ppsn/PughSS16,DBLP:conf/gecco/GravinaLY18,DBLP:conf/gecco/FontaineTNH20,DBLP:conf/gecco/ZardiniZZIF21,DBLP:conf/aaai/FontaineLKMTHN21,DBLP:conf/gecco/MedinaRMS23}. 
Moreover, QD has been used to evolve instances for the traveling salesperson problem~\citep{DBLP:conf/gecco/Bossek022}, for solving the traveling thief problem~\citep{DBLP:journals/telo/NikfarjamNN24}, and in context of time-use optimisation to improve health outcomes~\citep{DBLP:conf/gecco/NikfarjamSND024}.


Rigorous theoretical studies of QD algorithms have 
only been started recently. 
The first runtime analysis has been provided for the classical knapsack problem~\citep{DBLP:conf/ppsn/NikfarjamDN22} where it has been shown that QD is able to solve this problem in expected pseudo-polynomial time. 
Afterwards, runtime results have been 
proven for the computation of minimum spanning trees and the optimisation of submodular monotone functions under cardinality constraints~\citep{Bossek2024} and its generalisation to approximately submodular functions as well as the classical set cover problems~\citep{DBLP:journals/corr/abs-2401-10539}.
The mentioned studies have in common that they only seek on a single solution. To our knowledge, the only study analysing QD algorithms for problems with multiple solutions has been carried out by \citet{DBLP:conf/gecco/SchmidbauerOB0S24}. The authors considered monotone submodular functions with different artificial Boolean constraints which defined subproblems that served as stepping stones to solve the main optimisation problem.

\subsection{Our contribution}
In general, QD approaches aim to solve problems where several solutions that belong to different areas of a behaviour space are sought. 
We provide a first runtime analysis for a natural and classical problem seeking several solutions by studying QD approaches for the classical APSP problem. Note that the APSP can be solved in polynomial time using classical algorithms~\citep{10.1145/367766.368168,10.1145/321992.321993}. The goal of the investigations of evolutionary computation techniques for this problem is not to beat these classical algorithms in terms of runtime, but to provide a theoretical understanding of their working behaviour for this important fundamental problem.

Our work is built upon the previous analysis from~\citep{DBLP:journals/tcs/DoerrHK12} for the so-called ($\leq \mu$+$1$)~GA/EA algorithms where they showed that the expected runtime can be sped up to $O(n^{3.5}\sqrt{\log{n}})$ from $\Omega(n^4)$ when enabling crossover (or using the GA variant). The upper bound was 
improved to $O(n^{3.25} \sqrt[4]{\log{n}})$ in~\citep{DoerrT09} with a more refined analysis (which was also shown to be asymptotically tight in the worst case),
and later to $O(n^3\log{n})$ in~\citep{DBLP:journals/tcs/DoerrJKNT13} using better operators. 
Despite some criticisms of the ($\leq \mu$+$1$) scheme at the time, \eg see \citep{Corus2018}, we believe such scheme is natural in the modern view of QD algorithms because not-yet existing solutions 
(those not-yet occupying their slots) 
in the population of the GA/EA can be seen as equivalent to empty cells in the archive of a Map-Elites algorithm.
We therefore define and analyse the so-called QD-GA algorithms, in which a map of $n$ by $n$ is maintained to store the best-so-far shortest paths between all source-destination pairs for APSP. In each iteration, crossover or mutation operators as defined in~\citep{DBLP:journals/tcs/DoerrHK12} is exclusively applied from parents uniformly selected from the cells to create a new offspring solution which can be used to update the map. When only mutation is used, the algorithm
is referred to as QD-EA. 

Using a similar approach to in the previous work, however with a slightly different tool from~\citep{DBLP:journals/ipl/Witt14}, we show that QD-EA optimises APSP in $O(n^2\Delta \max\{\ell,\log{n}\})$ in fitness evaluations in expectation, 
here $\Delta$ is the maximum degree and $\ell$ is the diameter of the graph respectively. On the other hand, QD-GA with a constant probability of applying crossover can achieve expected runtime $O(n^{3.5}\sqrt{\log{n}})$. We also give a proof sketch for the improved tight bound of $O(n^{3.25}\sqrt[4]{\log{n}})$ for \qdga. 
We further prove that if the selection of parent cells is restricted to only include \emph{compatible} parents for crossover from the map, QD-GA only relying on crossover optimises APSP in $O(n^{3}\log{n})$ expected fitness evaluations, and therefore when combining with mutation, the expected runtime is $O(\min\{n^2\Delta \max\{\ell,\log{n}\},n^{3}\log{n}\})$. 
We refer to this latter version of QD-GA as the Fast QD-APSP algorithm. 
Our proven results not only hold in expectation but also with high probability, namely probability $1-o(1)$. 
Instances in which the lower bounds on the runtime hold tight with respect the to proven upper bounds are also discussed. 
In a related work, \citep{ant-sp-journal} proved the expected runtime $O(n^3\log^3{n})$ for ant colony optimisation algorithms with information sharing on APSP, however this is a different type of algorithms
compared to 
EAs or QDs.

The paper is structured as follows. In the next section,
we present the quality diversity approach for the APSP problem that is subject to our investigations.
We show that this approach solves this classical problem in expected polynomial time and discuss speed-ups through more tailored parent selection methods for crossover afterwards.
Finally, we finish with some concluding remarks.

\section{Quality Diversity and the APSP}
The all-pairs-shortest-path (APSP) problem is 
a classical combinatorial optimisation problem. 
Given a directed strongly connected graph $G(V, E)$ with $n=|V|$ and a weight function $w \colon E \rightarrow \mathds{N}$ 
on the edges, the goal 
is to compute for any given pair of nodes $(s, t) \in V \times (V \setminus \{s\})$, a shortest path (in terms of the weight of the chosen edges) from $s$ to $t$. 
If one only looks for the shortest path between two specific nodes, the problem is referred as the single-source single-destination shortest-path problem (SSSDSP). 
Like in \citep{DBLP:journals/tcs/DoerrHK12}, we assume that 
  $G$ is strongly connected, thus there exists a path from $s$ to $t$ for any distinct pair $(s,t)$ of nodes.

        \begin{algorithm}[t]     
         Initialise an empty map $M$ \\
         Set $M_{s,t}=(s,t), \forall (s,t) \in E$\;
\While{termination condition not met}{
                Choose $p \in [0,1]$ uniformly at random\;
                \If{$p \leq p_c$}{
                    Choose $I_{st}, I_{uv} \in M$ uniformly at random\;
                    Generate $I'$ from $I_{st}$ and $I_{uv}$ by crossover\;
                }\Else{
                    Choose an individual $I_{st} \in M$ uniformly at random\;
            
                Generate $I'$ from $I_{st}$ by mutation\;
                }
                $M \gets \textnormal{Update}(M, I')$\;
            } 
            \caption{Quality Diversity Genetic Algorithm (\QDGA) 
            with Crossover Probability $p_c$}
            \label{alg:QDGA}
        \end{algorithm}

        \begin{algorithm}[t]
        
                \If{$I'$ is a valid path from $s$ to $t$}{
                    \lIf{$M_{st}$ is empty}{
                        store $I'$ in cell $M_{st}$\!
                    }\Else{
                        Let $I$ be the search point in cell $M_{st}$\;
                        \lIf{$f_{st}(I') \leq f_{st}(I)$}{
                           replace $I$ by $I'$ in $M_{st}$ \!
                        }
                    }
                }

             \caption{Update Procedure for Minimization}
            \label{alg:Update}
        \end{algorithm}

In light of QD approach, we see solving APSP as evolving both diverse and high quality solutions of multiple SSSDSP. 
Therefore, a valid search point $I$ is 
a set of chosen edges, 
denoted by $E(I)$, that forms 
a valid path from a starting node $s$ to a target node $t$. 
This path can also be represented by a sequence of nodes to visit, \ie 
$I:=(s=v_1,v_2,\ldots,v_k=t)$ such that $E(I):=\{(v_{i},v_{i+1})\mid i\in [k-1]\} \subseteq E$ where $[n]$ denotes the set $\{1,\ldots,n\}$ for $n \in \mathbb{N}$. 
By $|I|$ we denote its \emph{cardinality}, 
that is, $|I|:=|E(I)|=k-1$. 
Occasionally, to make the source $s$ and destination $t$ clear, we also write such path $I$ as $I_{st}$. For storage, either the sequence $I$ or the set $E(I)$ can be stored in memory as either one of these uniquely defines $I$. For a given search point $I$, let 
\[
f_{st}(I) = \sum_{e \in E(I)} w(e)
\]
be the weight (or length) of $I$. 
A shortest path between $s$ and $t$ is the one that minimises $f_{st}$, and if this path is the above path $I$ then $I_{v_i v_{i+j}}=(v_i,v_{i+1},\ldots,v_{i+j})$ for any $i\in [k-1]$ and any $j<k-i$ is also a shortest path between $v_i$ and $v_{i+j}$.

We use the Quality Diversity Genetic Algorithm (QD-GA) given in Algorithm~\ref{alg:QDGA} for our investigations. The algorithm works with a 2D map $M$ of dimension $n$ by $n$ excluding the diagonal. 
The rows and columns of $M$ are indexed by nodes of $V$, and each cell $(s,t) \in V \times (V \setminus \{s\})$ in the map  can store an individual $I_{st}$ representing a path that starts at $s$ and ends in $t$. 

Following the same approach 
as \citep{DBLP:journals/tcs/DoerrHK12} for the initialisation of their population, 
our map $M$ is initialised 
to contain for each cell $M_{st}$ where $(s, t) \in E$.
the solution $I$ with $E(I)=\{(s,t)\}$ consisting of the path from $s$ to $t$ given by the edge $(s,t)$, while the other cells of $M$ are initialised to empty.
In each iteration, either mutation or crossover is used to produce a new individual from the individuals of the current map $M$.
Crossover is carried out in each iteration with probability $p_c$ and chooses two individuals from $M$ uniformly at random to produce an offspring $I'$.
If crossover is not applied then mutation is carried out on a uniformly at random chosen individual to produce $I'$. $I'$ is then used to update the map $M$ using Algorithm~\ref{alg:Update} in the case that $I'$ is a valid path. 
If $I'$ is a valid path from some node $s$ to some node $t$, then $I'$ is introduced into the cell $M_{st}$ if $M_{st}$ is currently empty.
Otherwise if $M_{st}$ is not empty then $I'$ replaces $I_{st}$ at position $M_{st}$ iff $f(I') \leq f(I_{st})$ holds. 

As common in the theoretical analysis of evolutionary algorithms, we measure the runtime of Algorithm~\ref{alg:QDGA} by the number of fitness evaluations. The \emph{optimisation time} of Algorithm~\ref{alg:QDGA} refers to the number of fitness evaluations, until $M$ contains for each pair $(s, t) \in V \times (V \setminus \{s\})$ a shortest path from $s$ to $t$. The \emph{expected optimisation time} refers to the expectation of this value. 

This optimisation time can depend on the following characteristics of the graph: 
  the maximum degree $\Delta:=\max_{u\in V}|\{(v\mid (u,v)\in E \vee (v,u) \in E\}|$,
  and the largest cardinality of the shortest paths $\ell:=\max_{(s,t)\in V\times (V\setminus\{s\}),I\in\mathcal{I}(s,t)}\{|I| \mid f_{st}(I) \text{ is minimised}\}$, where $\mathcal{I}(s,t)$ denotes set of all valid paths from $s$ to $t$.
For unweighted graphs, this parameter $\ell$ is known as the \emph{diameter}.

\subsection{Mutation and Crossover}
We use the following mutation and crossover operators introduced in~\citep{DBLP:journals/tcs/DoerrHK12}. 
For a given path $I_{st}$, starting at node $s$ and ending at node $t$, let $E_{st}$ be the set of all edges incident to $s$ or $t$. 
Mutation relies on the elementary operation of choosing an edge $e \in E_{st}$ uniformly at random dependent on a given individual $I'_{st}$. If $e$ is part of $I'$, then $e$ is removed from $I'$. Otherwise, $e$ is added $I'$ with potentially extends the path at its start or end node.  
The offspring $I'$ is obtained by creating a copy of $I_{st}$ and applying this elementary operation $k+1$ times to $I'$ where $k$ is chosen according to a Poisson distribution 
$Pois(\lambda)$ 
with parameter $\lambda=1$. Note that an elementary operation might create an invalid individual when adding an outgoing edge to $s$ or an incoming edge to $t$. Such invalid individuals are rejected by the algorithm according to the update procedure given in Algorithm~\ref{alg:Update}.

Our analysis will rely on mutation steps carrying out a single valid elementary operation extending a given path. For crossover, we choose two $I_{st}$ and $I_{uv}$ from the current map $M$ uniformly at random. If $t=u$, then the resulting offspring $I$ is obtained by appending $I_{uv}$ to $I_{st}$ and constitutes a path from $s$ to $v$.
If $I$ does not constitute a path, then the individual is discarded and has no effect during the update procedure of Algorithm~\ref{alg:Update}.

\subsection{Analytical tools} 

We use the following tail bound from \citep{DBLP:journals/ipl/Witt14} for sum of geometric variables in our analyses. 

\begin{lemma}[Theorem~1 in \citep{DBLP:journals/ipl/Witt14}]\label{lem:sum-geom-tail-bound}
Let $\{X_i\}_{i\in[n]}$ be independent random geometric variables with parameter $p_i\geq 0$, and let $X := \sum_{i=1}^{n} X_i$, 
and $p := \min_{i\in[n]}\{p_i\}$. 
If $\sum_{i=1}^{n} p_i^{-2} \leq s \leq \infty$ then for any $\lambda \geq 0$ it holds that 
\[
\Pr\left(X \geq \mathrm{E}[X] + \lambda \right)
    \leq e^{-\frac{1}{4}\min\left(\frac{\lambda^2}{s},\lambda p\right)}.
\]
\end{lemma}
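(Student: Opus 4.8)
The plan is to prove the bound by the exponential-moment (Chernoff) method, showing that each centered summand $X_i - \mathrm{E}[X_i]$ is sub-gamma and then assembling the product of moment generating functions. First I would apply Markov's inequality to $e^{tX}$ for a parameter $t>0$ to be fixed later, and exploit independence:
\[
\Pr\!\left(X \geq \mathrm{E}[X] + \lambda\right)
  \leq e^{-t(\mathrm{E}[X]+\lambda)}\,\mathrm{E}\!\left[e^{tX}\right]
  = e^{-t\lambda}\prod_{i=1}^{n}\mathrm{E}\!\left[e^{t(X_i-\mathrm{E}[X_i])}\right],
\]
where $\mathrm{E}[X]=\sum_{i}\mathrm{E}[X_i]=\sum_i p_i^{-1}$.

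The key step is to control the centered log-MGF of a single geometric variable. Writing the MGF of $X_i\sim\mathrm{Geom}(p_i)$ as $\tfrac{p_i e^{t}}{1-(1-p_i)e^{t}}$ (finite for $t<-\ln(1-p_i)$) and subtracting $t/p_i$, I would establish, for $0\le t<p_i$, the sub-gamma estimate
\[
\ln \mathrm{E}\!\left[e^{t(X_i-1/p_i)}\right]
  \leq \frac{t^{2}/p_i^{2}}{2\,(1-t/p_i)}.
\]
This is a second-order Taylor estimate with remainder control: the centered log-MGF vanishes to first order at $t=0$, its second derivative there equals $\mathrm{Var}(X_i)=(1-p_i)/p_i^{2}\le p_i^{-2}$, and one bounds the second derivative of the log-MGF on $[0,t]$ by a factor of the form $(1-t/p_i)^{-3}$ so that integrating twice reproduces the displayed denominator. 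Since $p_i\ge p$ gives $1-t/p_i\ge 1-t/p$, summing over $i$ and using $\sum_i p_i^{-2}\le s$ yields
\[
\ln\prod_{i=1}^{n}\mathrm{E}\!\left[e^{t(X_i-1/p_i)}\right]
  \leq \frac{s\,t^{2}}{2\,(1-t/p)},
\]
so that $\Pr(X\ge \mathrm{E}[X]+\lambda)\le \exp\!\bigl(-t\lambda+\tfrac{s t^{2}}{2(1-t/p)}\bigr)$, the sub-gamma form with variance factor $s$ and scale $1/p$.

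It then remains to optimise over $t$. Choosing $t=\tfrac{\lambda}{s+\lambda/p}$ (which satisfies $t<p$ automatically, keeping us below every pole) makes $1-t/p=\tfrac{s}{s+\lambda/p}$, and a direct substitution collapses the exponent to $-\tfrac{\lambda^{2}}{2(s+\lambda/p)}$. The final simplification to $\tfrac14\min(\lambda^2/s,\lambda p)$ is a two-case argument: if $\lambda^{2}/s\le \lambda p$ then $s+\lambda/p\le 2s$, giving exponent at most $-\tfrac{\lambda^{2}}{4s}$; otherwise $s+\lambda/p\le 2\lambda/p$, giving exponent at most $-\tfrac{\lambda p}{4}$. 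Each case matches the corresponding branch of the minimum, so the stated bound follows.

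The main obstacle is the per-variable MGF estimate. Because the geometric MGF has a pole at $t=-\ln(1-p_i)$, every bound must be carried out for $t$ strictly below this threshold, and the derivative estimates must be sharp enough to yield the clean linear denominator $(1-t/p)$ rather than a looser constant; only then does the optimisation produce precisely the factor $\tfrac14$ rather than a weaker one. Getting these constants to align is the delicate accounting, and is presumably where the original reference concentrates its effort.
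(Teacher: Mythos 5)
Your proposal cannot be checked against a proof in this paper, because the paper contains none: the lemma is imported verbatim from \citet{DBLP:journals/ipl/Witt14} (Theorem~1 there) and used purely as a black-box tail bound. So the only meaningful comparison is with the original source, whose proof is the standard exponential-moment argument --- which is exactly the route you take.

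Your outline is correct. The Chernoff setup, the aggregation of the per-variable bounds via $p_i \ge p$ and $\sum_i p_i^{-2}\le s$ into $\frac{st^2}{2(1-t/p)}$, the choice $t=\frac{\lambda}{s+\lambda/p}<p$ collapsing the exponent to $-\frac{\lambda^2}{2(s+\lambda/p)}$, and the two-case extraction of the constant $\frac14$ (if $\lambda^2/s\le \lambda p$ then $\lambda/p\le s$, so $s+\lambda/p\le 2s$; otherwise $s+\lambda/p\le 2\lambda/p$) all check out. The one step you leave as a sketch --- the per-variable sub-gamma estimate --- is true, but the ``delicate accounting'' you anticipate can be avoided entirely; two elementary inequalities suffice. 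From $e^t\le 1/(1-t)$ one gets $1-(1-p_i)e^t \ge (p_i-t)/(1-t)$ for $0\le t<p_i$, hence
\[
\mathrm{E}\left[e^{tX_i}\right]=\frac{p_ie^t}{1-(1-p_i)e^t}\le \frac{p_ie^t(1-t)}{p_i-t}\le \frac{p_i}{p_i-t}=\frac{1}{1-t/p_i},
\]
where the last step uses $e^t(1-t)\le 1$. Consequently
\[
\ln \mathrm{E}\left[e^{t(X_i-1/p_i)}\right]\le -\frac{t}{p_i}-\ln\left(1-\frac{t}{p_i}\right)=\sum_{k\ge 2}\frac{(t/p_i)^k}{k}\le \frac{t^2/p_i^2}{2\left(1-t/p_i\right)},
\]
which is precisely your claimed estimate, valid on all of $[0,p_i)$; note also that $t<p_i\le -\ln(1-p_i)$ automatically keeps $t$ below every pole, as you observed. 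Your Taylor/second-derivative route would also go through --- the required bound $(1-p_i)e^t/(1-(1-p_i)e^t)^2 \le p_i/(p_i-t)^3$ follows from the same two inequalities --- but it is strictly more work than the algebraic derivation above. With that display substituted for your sketch, your proof is complete and is, in substance, the proof given in the cited source.
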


We then have the following corollary for a single variable.
\begin{corollary}\label{cor:geom-tail-bound}
Let $X$ be a geometric random variable with parameter $p>0$, then for any real number $c>0$ and any natural number $n\geq e^{1/c}$, it holds $\Pr\left(X \geq \frac{c\ln{n}+1}{p}\right)\leq n^{-c/4}$.
\end{corollary}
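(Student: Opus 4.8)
The plan is to instantiate Lemma~\ref{lem:sum-geom-tail-bound} in the degenerate case of a single geometric summand and then to choose the deviation parameter $\lambda$ so that the claimed threshold $(c\ln n+1)/p$ matches $\mathrm{E}[X]+\lambda$. Concretely, I would take the number of variables in the lemma to be one, with $X_1=X$ and parameter $p_1=p$, so that $p=\min_i p_i=p$ and $\sum_i p_i^{-2}=p^{-2}$; hence one may set $s=p^{-2}$. Since a geometric variable with parameter $p$ satisfies $\mathrm{E}[X]=1/p$, writing the threshold as $\mathrm{E}[X]+\lambda = 1/p+\lambda$ forces $\lambda = (c\ln n)/p \geq 0$, which is an admissible choice for the lemma.

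Next I would evaluate the two arguments of the minimum appearing in the exponent. A direct computation gives $\lambda^2/s = (c\ln n/p)^2\cdot p^2 = c^2(\ln n)^2$ and $\lambda p = (c\ln n/p)\cdot p = c\ln n$. Thus Lemma~\ref{lem:sum-geom-tail-bound} yields
\[
\Pr\left(X \geq \frac{c\ln n+1}{p}\right)
  \leq \exp\left(-\tfrac14\min\left\{c^2(\ln n)^2,\; c\ln n\right\}\right).
\]

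The only step that needs care is identifying which term attains the minimum, and this is exactly where the hypothesis $n\geq e^{1/c}$ enters. That hypothesis is equivalent to $c\ln n\geq 1$, and multiplying both sides by the nonnegative quantity $c\ln n$ gives $c^2(\ln n)^2\geq c\ln n$. Hence the minimum equals $c\ln n$, and the bound simplifies to $\exp(-\tfrac{c}{4}\ln n)=n^{-c/4}$, as claimed.

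I do not anticipate a genuine obstacle here: the statement is a routine specialisation of the general tail bound, and essentially all of the content lies in the bookkeeping of $\lambda$, $s$, and $\mathrm{E}[X]$, together with the observation that the side condition $n\geq e^{1/c}$ is precisely what makes the linear term $\lambda p$ dominate the quadratic term $\lambda^2/s$ in the relevant regime. The mild subtlety worth flagging is the convention $\mathrm{E}[X]=1/p$ for the geometric distribution (counting trials until the first success), which should be stated explicitly so that the choice of $\lambda$ is transparent.
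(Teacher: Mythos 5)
Your proposal is correct and follows exactly the paper's own route: the paper likewise applies Lemma~\ref{lem:sum-geom-tail-bound} with the single variable $X$, $s=1/p^2$, and $\lambda=(c\ln n)/p$, and resolves the minimum as $c\ln n$ using the hypothesis $n\geq e^{1/c}$. Your write-up is, if anything, slightly more explicit about why that hypothesis makes the linear term dominate, but the argument is the same.
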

\begin{proof}
Recall $\mathrm{E}[X]=1/p$, thus
applying Lemma~\ref{lem:sum-geom-tail-bound} with $\lambda:=\frac{c\ln{n}}{p}$ and $s:=\frac{1}{p^2}$ gives
\begin{align*}
\Pr\left(X \geq \frac{c\ln{n}+1}{p}\right)
    &\leq e^{-\frac{\min\left((c\ln{n})^2,c\ln{n}\right)}{4}}\\
    &= e^{-\frac{c\ln{n}}{4}} 
    = n^{-\frac{c}{4}}. \qedhere
\end{align*}
\end{proof}

In our analysis below, we also need the following inequalities which are summarised in the next two lemmas.

\begin{lemma}\label{lem:floor-and-ceil}
    For any real number $x>0$, it holds that
    $2\lfloor x \rfloor - \lfloor(3/2)x\rfloor + 1 \geq \lceil x/2\rceil - 1$.
\end{lemma}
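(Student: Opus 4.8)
The plan is to move everything to one side and exploit the fact that the resulting quantity is an integer. Define
\[
D := 2\lfloor x \rfloor - \left\lfloor (3/2)x \right\rfloor - \lceil x/2 \rceil + 2,
\]
so that the claimed inequality is exactly equivalent to $D \geq 0$. The crucial observation is that $\lfloor x \rfloor$, $\lfloor (3/2)x \rfloor$ and $\lceil x/2 \rceil$ are all integers, hence $D \in \mathbb{Z}$. Consequently it suffices to prove the strictly weaker real-valued bound $D > -1$, since any integer exceeding $-1$ is at least $0$.

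To get $D > -1$ I would apply the three standard floor/ceiling estimates termwise: $\lfloor x \rfloor > x - 1$, $\lfloor (3/2)x \rfloor \leq (3/2)x$ (so that $-\lfloor (3/2)x \rfloor \geq -(3/2)x$), and $\lceil x/2 \rceil < x/2 + 1$ (so that $-\lceil x/2 \rceil > -x/2 - 1$). Summing these with the constant $+2$ gives
\[
D > (2x - 2) - (3/2)x - (x/2 + 1) + 2 = \bigl(2 - \tfrac{3}{2} - \tfrac{1}{2}\bigr)x - 1 = -1,
\]
where the $x$-terms cancel exactly. Together with integrality this yields $D \geq 0$, which is the claim.

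The only point requiring care is the bookkeeping of strict versus non-strict inequalities: the two bounds involving $\lfloor x \rfloor$ and $\lceil x/2 \rceil$ must be used in their strict form so that the final constant lands precisely at $-1$ rather than something larger. This is not merely cosmetic, since the statement is tight: at $x = 0.9$ one checks $2\cdot 0 - 1 + 1 = 0 = \lceil 0.45\rceil - 1$, so $D = 0$ is attained and the real-valued estimate genuinely cannot be pushed below $-1$. I expect no real obstacle here; the integrality shortcut neatly avoids the alternative, more tedious route of writing $x = \lfloor x \rfloor + f$ with $f \in [0,1)$ and splitting into cases according to the parity of $\lfloor x \rfloor$ and the subintervals of $f$ on which the floors and ceiling jump.
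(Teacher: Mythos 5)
Your proof is correct, but it takes a genuinely different route from the paper's. The paper applies the inequality $\lfloor a+b\rfloor \leq \lfloor a\rfloor + \lfloor b\rfloor + 1$ twice to the decomposition $\lfloor (3/2)x\rfloor = \lfloor x + x - x/2\rfloor$, obtaining $\lfloor(3/2)x\rfloor \leq 2\lfloor x\rfloor + \lfloor -x/2\rfloor + 2 = 2\lfloor x\rfloor - \lceil x/2\rceil + 2$ (using $\lfloor -z\rfloor = -\lceil z\rceil$ in the last step), and then substitutes this bound directly into the claim; the argument stays entirely within integer-valued floor identities. You instead bound each of the three terms by its real argument with error less than $1$, observe that the real parts cancel exactly, and use integrality of $D$ to upgrade the strict real bound $D > -1$ to $D \geq 0$. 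Both arguments are sound and of comparable length. Your integrality trick is the more mechanical and reusable one: it applies to any linear combination of floors and ceilings whose real parts cancel, as long as at least one strict termwise bound survives the summation, and your check at $x = 0.9$ correctly shows the constant is tight. You are also right that the strict/non-strict bookkeeping is the one delicate point: with all non-strict bounds you would only get $D \geq -1$, which integrality cannot improve, so the strictness of $\lfloor x\rfloor > x-1$ (or of $\lceil x/2\rceil < x/2+1$) is genuinely load-bearing. The paper's route avoids this issue entirely but requires spotting the right decomposition of $(3/2)x$ and the reflection identity relating floor and ceiling.
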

\begin{proof}
    Applying the well-known inequality $\lfloor x+y\rfloor\leq \lfloor x\rfloor + 
    \lfloor y\rfloor+1$ which holds for any real numbers $x$ and $y$, to 
    $\lfloor(3/2)x\rfloor = \lfloor x + x - x/2\rfloor$ twice gives 
    $\lfloor(3/2)x\rfloor\leq \lfloor x \rfloor + \lfloor x \rfloor + \lfloor - 
    x/2\rfloor + 2 = 2\lfloor x \rfloor - \lceil x/2\rceil + 2$. Putting this
    back to the original statement concludes the proof. 
\end{proof}

\begin{lemma}\label{lem:change-exponent}
    For any natural number $i\geq 5$, it holds that 
    $(3/2)^i/2 - 1\geq (4/3)^i/2$.
\end{lemma}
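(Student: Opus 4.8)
The plan is to reduce the claim to a cleaner equivalent inequality and then establish it by a one-line monotonicity argument seeded by a single base-case check. First I would multiply the stated inequality $(3/2)^i/2 - 1 \geq (4/3)^i/2$ through by $2$ and rearrange, which turns it into the equivalent form
\[
g(i) := \left(\tfrac{3}{2}\right)^{i} - \left(\tfrac{4}{3}\right)^{i} \geq 2 .
\]
This reformulation isolates a single function $g$ whose growth I can control directly, avoiding the clutter of the additive constant.

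Next I would verify the base case $i = 5$ by direct computation: $(3/2)^5 = 243/32$ and $(4/3)^5 = 1024/243$, so $g(5) = 243/32 - 1024/243 > 3 > 2$, giving the inequality with room to spare. The heart of the argument is then to show that $g$ is non-decreasing for $i \geq 1$, so that $g(i) \geq g(5) \geq 2$ for all $i \geq 5$. To this end I would bound the increment
\[
g(i+1) - g(i) = \tfrac{1}{2}\left(\tfrac{3}{2}\right)^{i} - \tfrac{1}{3}\left(\tfrac{4}{3}\right)^{i},
\]
and observe that this is non-negative exactly when $\tfrac{1}{2}(3/2)^i \geq \tfrac{1}{3}(4/3)^i$, equivalently when $(9/8)^{i} \geq 2/3$. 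Since $9/8 > 1$, the left-hand side is at least $1$ for every $i \geq 0$, so the increment is strictly positive and $g$ is increasing.

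Combining the two pieces finishes the proof: monotonicity propagates the base case $g(5) \geq 2$ upward to all $i \geq 5$, which is the claimed bound after undoing the initial rearrangement. I do not anticipate a genuine obstacle here; the only point requiring a little care is getting the increment $g(i+1)-g(i)$ algebraically right and confirming the ratio simplifies to $(9/8)^i$ rather than something delicate. One could equally phrase the final step as a short induction on $i$ with base case $i=5$ and inductive step $g(i+1) = g(i) + \bigl(g(i+1)-g(i)\bigr) \geq g(i) \geq 2$, which may read more transparently than invoking monotonicity as a separate fact.
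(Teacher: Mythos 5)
Your proof is correct and follows essentially the same route as the paper: both reduce the claim to $f(i) := (3/2)^i - (4/3)^i \geq 2$, verify the base case $f(5) \approx 3.38 > 2$, and conclude by monotonicity of $f$. The only (cosmetic) difference is that you establish monotonicity via the discrete increment $f(i+1) - f(i) = \tfrac{1}{2}(3/2)^i - \tfrac{1}{3}(4/3)^i \geq 0$, reduced to $(9/8)^i \geq 2/3$, whereas the paper appeals to the derivative; your version is, if anything, slightly more self-contained.
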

\begin{proof}
    It suffices to show that $f(i):=(3/2)^i - (4/3)^i\geq 2$ for
    $i\geq 5$. It is easy to see that $f(i)$ is monotonically increasing in $i$ 
    by looking at its derivative, therefore for $i\geq 5$, we have 
    $f(i)\geq f(5)=(3/2)^5-(4/3)^5\geq 3.379>2$.
\end{proof}

\section{Runtime Analysis of QD-GA on the APSP} 
We first consider the optimisation progress achievable through mutation only. 
Particularly, we refer to \qdga (Algorithm~\ref{alg:QDGA}) with $p_c=0$ as \qdea. 
The following theorem shows that \qdga with a constant probability of carrying out mutation, thus including \qdea as a special case, solves APSP efficiently and bounds its optimisation time depending on the structural parameters $\Delta$ and $\ell$ of the given input.
\begin{theorem}
    \label{thm:QDGA-only-mutation}
The optimisation time of \qdga with $p_c=1- \Omega(1)$ is $O(n^2\Delta\max\{\ell,\log{n}\})$ in expectation and with probability $1-o(1)$. In particular, the statement on the optimisation time also holds for \qdea.
\end{theorem}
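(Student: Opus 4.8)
The plan is to exploit two facts. First, the update rule in Algorithm~\ref{alg:Update} only ever replaces the content of a cell by a path of no larger weight, so once a cell $M_{st}$ holds a shortest $s$-$t$ path it keeps a shortest path forever; this monotonicity holds irrespective of whether crossover or mutation is applied in an iteration. Hence it suffices to bound the time until every cell has been filled with a shortest path for the first time. Second, I would use the optimal substructure of shortest paths noted in Section~2: fix a pair $(s,t)$ and a shortest path $P=(s=v_0,v_1,\dots,v_L=t)$ of minimum cardinality $L\le\ell$. Every prefix $(v_0,\dots,v_j)$ is a shortest $s$-$v_j$ path, and appending the edge $(v_j,v_{j+1})$ to \emph{any} shortest $s$-$v_j$ path yields a shortest $s$-$v_{j+1}$ path, since its weight equals the optimal distance $d(v_{j+1})=d(v_j)+w(v_j,v_{j+1})$. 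The cell $M_{s v_1}$ is correct from initialisation because the single edge $(s,v_1)$ is a shortest $s$-$v_1$ path, so the whole path can be grown end-first through a chain of $L-1$ extending mutations.

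Next I would lower-bound the probability that one iteration performs the desired extension, given that $M_{s v_j}$ already stores a shortest path. The iteration must choose mutation, which happens with probability $1-p_c=\Omega(1)$; select the cell $M_{s v_j}$ among the at most $n(n-1)$ occupied cells, probability $\ge 1/(n(n-1))$; draw exactly one elementary operation, \ie the value $k=0$ from $\mathrm{Pois}(1)$, probability $e^{-1}$; and pick the extending edge $(v_j,v_{j+1})$ among the at most $2\Delta$ edges of $E_{s v_j}$, probability $\ge 1/(2\Delta)$. The resulting path is always valid: since edge weights are positive, every node on a shortest $s$-$v_j$ path has distance strictly below $d(v_{j+1})$, so $v_{j+1}$ cannot already lie on it and no cycle is created. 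Multiplying the factors gives a per-iteration success probability $p^\ast=\Omega(1/(n^2\Delta))$, uniformly over the run.

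The key step is to turn these per-cell bounds into a statement that holds simultaneously for all $\Theta(n^2)$ cells. Since the $L-1\le\ell$ extension phases for a fixed pair run sequentially, the strong Markov property makes the time to fill $M_{st}$ stochastically dominated by a sum $X$ of at most $\ell$ independent geometric variables, each with parameter $p^\ast$, so $\mathrm{E}[X]\le \ell/p^\ast=O(n^2\Delta\ell)$. I would then apply Lemma~\ref{lem:sum-geom-tail-bound} with deviation $\lambda=\Theta(\max\{\ell,\log n\}/p^\ast)$: because $\lambda p^\ast=\Omega(\max\{\ell,\log n\})$ and $\lambda^2{p^\ast}^2/\ell=\Omega(\max\{\ell,\log n\})$ (using $\max\{\ell,\log n\}\ge\ell$), the bound yields failure probability $n^{-\Omega(1)}$ that a single cell is unfilled after $O(n^2\Delta\max\{\ell,\log n\})$ iterations. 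A union bound over the $\Theta(n^2)$ cells, with the constant in $\lambda$ chosen large enough, then shows all cells are filled within this time with probability $1-o(1)$. The factor $\max\{\ell,\log n\}$ is exactly what the union bound forces: the expected build-up time $\ell/p^\ast$ is only self-concentrating when $\ell\gtrsim\log n$, and otherwise the extra $\log n$ supplies the deviation needed to beat the $n^2$ cells.

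Finally, to get the same bound in expectation I would invoke monotonicity again through a restart argument: partition time into blocks of length $T_0=O(n^2\Delta\max\{\ell,\log n\})$ equal to the high-probability bound. Within any block, regardless of the current map, each still-unfilled cell can be rebuilt from the always-correct cells $M_{s v_1}$ via the same chain of extensions (a currently stored non-shortest or empty intermediate cell is simply overwritten by a shortest path during the chain), so all cells are filled within one block with probability $1-o(1)\ge 1/2$, and previously solved cells are never lost. The number of blocks is then dominated by a geometric variable with success probability at least $1/2$, giving expected optimisation time $O(T_0)=O(n^2\Delta\max\{\ell,\log n\})$. Setting $p_c=0$ specialises everything to \qdea. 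I expect the main obstacle to be the third step, proving the bound for all $\Theta(n^2)$ cells at once, as it needs the concentration of the geometric sums together with a choice of $\lambda$ that handles both the $\ell\ge\log n$ and $\ell<\log n$ regimes; the per-cell probability computation and the monotonicity arguments are comparatively routine.
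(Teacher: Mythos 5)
Your proposal is correct and follows essentially the same route as the paper's proof: decompose the waiting time for each pair along a fixed shortest path into sequential extension phases, dominate each phase by a geometric variable with parameter $\Omega((1-p_c)/(n^2\Delta))$, apply Lemma~\ref{lem:sum-geom-tail-bound} to the resulting sum, union bound over the $\Theta(n^2)$ cells, and use a restart argument for the expectation. The only cosmetic differences are that the paper pads the geometric sum to $\max\{\ell,5\log n\}$ terms whereas you keep $\ell$ terms and enlarge the deviation $\lambda$ to $\Theta(\max\{\ell,\log n\}/p^\ast)$ instead, and that you make explicit the path-validity point (the extending endpoint cannot already lie on the stored shortest path) which the paper leaves implicit; both arguments are equivalent.
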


\begin{proof}
We first estimate the number of iterations $T_{st}$ to optimise an arbitrary cell 
$M_{st}$. 
Let $I=(s=v_1,\ldots,v_k=t)$ be any shortest path 
of this cell, then let $X_i$ be
the number of iterations in which cell $M_{s v_i}$ is optimised but 
$M_{s v_{i+1}}$ is not yet optimised, for any $i\in [k-1]$, so $T_{st}= \sum_{i=1}^{k-1}X_i$.
Since multiple elementary operations are allowed in one mutation, it is possible to optimise longer paths before the shorter ones, in other words some $X_i$ can take value zero.
However, for 
an upper bound on the optimisation time it suffices to consider the case where the path is extended by adding only one correct edge at a time. 
That is, to optimise $M_{s v_{i+1}}$, 
it suffices to pick the solution in the optimised cell $M_{s v_i}$ as parent, \ie probability
$1/(n(n-1))$, note that this solution has an equal weight to that of the path $(v_1,\ldots,v_i)$. 
Then the mutation is applied with only one elementary operation, 
\ie with probability $(1-p_c)/e$ for the distribution $\mathrm{Pois}(1)$, where 
the edge $(v_i,v_{i+1})$ is chosen for adding to the parent, \ie with probability
$1/|E_{st}|\geq 1/(2\Delta)$. 
The obtained offspring therefore has equal weight to that of the shortest path $(v_1,\ldots,v_{i+1})$
thus it is used to update and hence optimise $M_{s v_{i+1}}$, and so 
variable $X_i$ is stochastically
dominated by a geometric random variable $Y_i$ with parameter $p:=(1-p_c)/(2e n^2\Delta)$
regardless of $i$ and of the target path $I$. Furthermore, 

$$|I|=k-1\leq \ell \leq \max\{\ell,5\log{n}\}=:\ell'$$
and therefore
$$T_{st}= \sum_{i=1}^{k-1}X_i\preceq \sum_{i=1}^{\ell'} Y_i =: Y.$$
By linearity of
expectation $\mathrm{E}[Y]=\frac{2e n^2\Delta}{1-p_c} \cdot \ell'$, thus applying 
Lemma~\ref{lem:sum-geom-tail-bound} 
with $\lambda := \frac{2(6-e) n^2\Delta \ell'}{1-p_c}$ gives
\begin{align*}
\Pr\left(T_{st}\geq \frac{12 n^2\Delta \ell'}{1-p_c}\right)
  &\leq \Pr(Y\geq \mathrm{E}[Y] + \lambda)\\
  &\leq 
  e^{-\frac{1}{4}
      \min\left(\frac{\lambda^2}{(1/p)^2\ell'},
      \lambda p\right)}.
\end{align*}
Note that $\lambda p=(6/e-1)\ell'$ while $\lambda^2/((1/p)^2 \ell')
=(6/e-1)^2 \ell'>\lambda p$, thus 
the probability that $M_{st}$ is not optimised after $\frac{12 n^2\Delta \ell'}{1-p_c}
=:\tau$
iterations is at most
$
e^{-(6/e-1)\ell'/4}
$.

We now consider all the $n(n-1)$ cells. Given $\ell'\geq 5\log{n}=5\ln{n}/\ln{2}$, 
by a union bound the probability that not all cells are optimised after 
$\tau$ iterations is at most 
\begin{align*}
n(n-1)e^{-(6/e-1)\ell'/4} \leq n^2 e^{-5(6/e-1)\ln{n}/(4\ln{2})} < n^{-1/6}
\end{align*}
since $(30/e-5)/(4 \ln 2) > 1/6$. 
Thus with probability $1-o(1)$, all cells are optimised in $\tau = O(n^2\Delta\ell')$ 
iterations since $1-p_c=\Omega(1)$. Particularly this holds regardless of the initial 
population, thus the expected optimisation time is at most $(1+o(1))O(n^2\Delta\ell')$.
\end{proof}

Note that the running time of \qdea heavily depends on the characteristics of the graph. Particularly, for $\Delta=\Theta(n)$ and $\ell=\Theta(n)$ we obtain an upper bound of $O(n^4)$ on the runtime of the algorithm. 
A matching lower bound of $\Omega(n^4)$ can be obtained for $\Delta=\Theta(n)$ and $\ell=\Theta(n)$
by considering the complete directed graph $K_n$ as follows. It is defined as $K_n=(V,E)$ with $V:=\{v_1, \ldots , v_n\}$, $E:=\{(u,v) \mid u,v \in V, u \neq v\}$, and weights $w(v_i,v_j) = 1$ if $j-i=1$ and $w(v_i,v_j) = n$ otherwise (see~\citet{DBLP:journals/tcs/DoerrHK12}). Using Theorem 11 in~\citet{DBLP:journals/tcs/DoerrHK12}, we can obtain the following lower bound.
\begin{theorem}
\label{thm:lower-bound-mutation}
   The optimisation time of \qdea on $K_n$ with the above weight function is $\Omega(n^4)$ with probability $1-o(1)$ and therefore, the expected optimisation time is $\Theta(n^4)$. 
\end{theorem}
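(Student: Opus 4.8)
The plan is to reduce the claim to the mutation-only lower bound of \citet{DBLP:journals/tcs/DoerrHK12} (their Theorem~11) and then to supply the short extra arguments needed to match their setting and to upgrade the bound to hold with high probability. First I would observe that on $K_n$ the initial map is already \emph{full}: every off-diagonal cell $(s,t)$ corresponds to an edge of $K_n$ and is therefore initialised, so throughout the run exactly $n(n-1)$ cells are occupied and uniform selection among occupied cells coincides with uniform selection over a fixed population of size $\mu=n(n-1)$. With $p_c=0$ the mutation operator and the per-cell keep-the-better update of Algorithm~\ref{alg:Update} are exactly those analysed in \citet{DBLP:journals/tcs/DoerrHK12}, so \qdea on this instance is their mutation-only $(\le\mu{+}1)$~EA and the $\Omega(n^4)$ lower bound of Theorem~11 applies. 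Combined with the matching $O(n^4)$ upper bound from Theorem~\ref{thm:QDGA-only-mutation} (here $\Delta=\ell=\Theta(n)$, so $n^2\Delta\max\{\ell,\log n\}=\Theta(n^4)$), this yields expected optimisation time $\Theta(n^4)$.

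To make the argument self-contained and to obtain the high-probability statement, I would track the potential $R_t$, the length of the longest \emph{optimal prefix chain} from $v_1$ present in the map, i.e.\ the largest $k$ such that cell $(v_1,v_{k+1})$ stores the path $(v_1,\dots,v_{k+1})$. For every forward pair $(v_i,v_j)$ with $i<j$ the chain $(v_i,\dots,v_j)$ is the \emph{unique} minimum-weight path (any deviation uses a weight-$n$ edge and is strictly worse since $j-i<n$), so an optimised forward cell never changes its content and $R_t$ is non-decreasing. Initially $R_0=1$, and the target cell $(v_1,v_n)$ is optimised exactly when $R_t=n-1$; hence the optimisation time is at least the first hitting time of $R_t=n-1$, a distance of $n-2=\Omega(n)$.

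The key estimate, and the crux of the proof, is that each mutation increases $R$ with probability only $O(1/n^3)$. When $R_t=k$, raising $R$ requires creating $(v_1,\dots,v_{k+2})$ (longer jumps need several correct edges and contribute only $O(1/n^4)$), and a single elementary operation can create it from only two parent cells: the current longest prefix chain $(v_1,\dots,v_{k+1})$ extended by $(v_{k+1},v_{k+2})$ at its end, or the chain $(v_2,\dots,v_{k+2})$ extended by $(v_1,v_2)$ at its front. Each such parent is selected with probability $1/(n(n-1))=O(1/n^2)$ and the required incident edge is one of $|E_{st}|=\Theta(\Delta)=\Theta(n)$ candidates, so $\Pr[R_{t+1}>R_t]=O(1/n^3)$ uniformly, whence $\mathrm{E}[R_{t+1}-R_t]=O(1/n^3)$. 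The additive-drift lower bound then gives expected hitting time $\Omega\!\big(n/(1/n^3)\big)=\Omega(n^4)$.

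For the high-probability claim I would dominate the hitting time from below by a sum of $n-2$ independent geometric variables, each with parameter $O(1/n^3)$, whose expectation is $\Omega(n^4)$, and apply a concentration bound to conclude $T=\Omega(n^4)$ with probability $1-o(1)$; a high-probability lower bound in turn forces $\mathrm{E}[T]=\Omega(n^4)$. I expect two steps to be the main obstacles. First, rigorously proving the ``$O(1)$ relevant parents'' estimate: one must argue that the single long path toward $v_n$ cannot exploit the $\Theta(n)$-fold parallelism over source nodes that speeds up ordinary cells in Theorem~\ref{thm:QDGA-only-mutation}, and carefully bound the contribution of multi-operation mutations extending by more than one edge. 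Second, the concentration step needs a \emph{lower}-tail bound for a sum of geometric variables, whereas Lemma~\ref{lem:sum-geom-tail-bound} supplies only the upper tail; I would either invoke the companion lower-tail bound from \citet{DBLP:journals/ipl/Witt14}, or equivalently bound the number of $R$-increases within $cn^4$ steps by a binomial and apply a Chernoff bound.
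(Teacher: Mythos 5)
Your first paragraph is, in essence, the paper's entire proof: the paper gives no argument beyond observing that on $K_n$ every cell corresponds to an edge and is therefore occupied from initialisation onwards, so \qdea coincides with the mutation-only $(\leq\mu+1)$~EA of \citet{DBLP:journals/tcs/DoerrHK12}, and then invoking their Theorem~11. That theorem already supplies the $\Omega(n^4)$ bound with probability $1-o(1)$ (this is exactly how the paper phrases the derivation), and the expectation statement follows by combining it with the $O(n^4)$ upper bound from Theorem~\ref{thm:QDGA-only-mutation}, as you describe. Had you stopped there, your proposal would be correct and identical in approach to the paper; your belief that the citation covers only the expectation, and that the high-probability claim needs a separate argument, is where the trouble starts.

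The self-contained drift argument you add, and on which you rely for the high-probability claim, has a genuine gap, and it sits at the crux rather than at the periphery. The key estimate $\Pr[R_{t+1}>R_t]=O(1/n^3)$ is not merely unproven; its supporting reasoning is wrong. You dismiss ``longer jumps'' on the grounds that they ``need several correct edges'', but a jump of $R_t$ from $k$ to any value $j-1>k+1$ is achieved by a \emph{single} elementary operation: prepending the edge $(v_1,v_2)$ to the chain stored in an optimised cell $M_{v_2 v_j}$ with $j\geq k+2$. The $v_2$-chains are built from the $v_3$-chains (and so on) entirely without reference to the progress of the $v_1$-cells, so nothing in your argument prevents $\Theta(n)$ such cells from being optimised while $R_t=k$; in that case the per-step probability of raising $R_t$ is only $O(1/n^2)$, and additive drift yields merely $\Omega(n^3)$. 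In other words, the ``$O(1)$ relevant parents'' statement that you flag as your first obstacle is not a technical loose end to be tidied up: controlling the joint progress of all $n$ sources simultaneously is precisely the non-trivial content of Theorem~11 in \citet{DBLP:journals/tcs/DoerrHK12}, and a one-dimensional potential such as your $R_t$ cannot capture it without a substantially new idea (e.g., a union bound over all possible creation histories of the full chain). Your second flagged obstacle, the missing lower tail for sums of geometric variables, is by contrast harmless --- the binomial-plus-Chernoff reformulation you suggest works --- but only once the per-step probability bound has actually been established.
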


We now consider 
the case where 
$p_c$ is a constant in $(0,1)$.
This implies that both crossover and mutation are enabled. We show the following upper bound which points out that crossover speeds up the optimisation of our QD algorithm.

\begin{theorem}\label{thm:QDGA-mixing-mutation+crosover}
The optimisation time of \qdga with constant $p_c \in (0,1)$ is $O(n^{3.5} \sqrt{\log n})$ in expectation and with probability $1-o(1)$. 
\end{theorem}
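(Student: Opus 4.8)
The plan is to let the two operators specialise: mutation assembles all \emph{short} shortest paths and crossover assembles the \emph{long} ones, with a threshold $L_0:=\lceil\sqrt{n\log n}\rceil$ balancing the two regimes. I would show that (i) by time $O(n^{3.5}\sqrt{\log n})$ every shortest path of cardinality at most $L_0$ occupies its cell, and (ii) conditioned on this, a further $O(n^{3.5}\sqrt{\log n})$ iterations let crossover produce all remaining shortest paths. Because the update procedure in Algorithm~\ref{alg:Update} only ever overwrites a cell with a path of equal-or-smaller weight, once a shortest path occupies a cell it remains shortest, so the two bounds simply add. I assume positive edge weights (\ie $\mathds{N}=\{1,2,\dots\}$), which guarantees that every sub-path of a shortest path is shortest and, crucially, that the concatenation of two shortest paths sharing an endpoint lying on a shortest path is automatically a \emph{simple} path: any repeated vertex would contribute a positive-weight cycle and push the total weight strictly above the distance, a contradiction. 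Hence all the concatenations used below are valid paths accepted by Algorithm~\ref{alg:Update}.

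For step (i) I would reuse the mutation analysis of Theorem~\ref{thm:QDGA-only-mutation} almost verbatim, but restrict attention to the cells whose shortest path has cardinality at most $L_0$. A single correct extending elementary operation has probability at least $\frac{1-p_c}{2en^2\Delta}$, and using the trivial bound $\Delta\le n-1$ this is $\Omega(1/n^3)$, so each such cell is optimised within a sum of at most $L_0$ geometric variables of parameter $\Omega(1/n^3)$. The same application of Lemma~\ref{lem:sum-geom-tail-bound} together with a union bound over all cells as in the proof of Theorem~\ref{thm:QDGA-only-mutation} shows that all short shortest paths are present after $O(n^3\max\{L_0,\log n\})=O(n^3L_0)$ iterations \whp, where the last equality uses $L_0\ge\log n$. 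With $L_0=\lceil\sqrt{n\log n}\rceil$ this is exactly $O(n^{3.5}\sqrt{\log n})$; crossover steps occurring meanwhile can only help and are discarded in the bound.

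For step (ii) I would run a doubling argument with growth factor $3/2$, setting $L_j:=\lfloor(3/2)L_{j-1}\rfloor$ and letting phase $j\ge 1$ start once all shortest paths of cardinality at most $L_{j-1}$ are present and end once all of cardinality at most $L_j$ are. Fix a target cell $M_{st}$ with shortest-path cardinality $L\in(L_{j-1},L_j]$ and a fixed shortest path $(s=v_1,\dots,v_{L+1}=t)$. A single crossover completes $M_{st}$ whenever it selects, in order, the cells $M_{s v_{i+1}}$ and $M_{v_{i+1} t}$ for a split index $i$ with both halves of cardinality at most $L_{j-1}$, \ie with $L-L_{j-1}\le i\le L_{j-1}$; the number of such indices is $2L_{j-1}-L+1\ge 2L_{j-1}-\lfloor(3/2)L_{j-1}\rfloor+1\ge\lceil L_{j-1}/2\rceil-1=\Omega(L_{j-1})$ by Lemma~\ref{lem:floor-and-ceil}, and by the phase invariant both corresponding cells already hold shortest paths. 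Since at most $n(n-1)$ cells are filled, each ordered selection has probability at least $1/(n(n-1))^2$, so the per-iteration probability of completing $M_{st}$ is $\Omega(p_cL_{j-1}/n^4)$. Applying Corollary~\ref{cor:geom-tail-bound} to this geometric waiting time and taking a union bound over the at most $n^2$ targets of the phase shows that phase $j$ finishes within $O\!\left(\frac{n^4\log n}{p_cL_{j-1}}\right)$ iterations with probability $1-n^{-\Omega(1)}$.

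Summing over phases yields a geometric series $\sum_{j\ge 1}O\!\left(\frac{n^4\log n}{p_cL_{j-1}}\right)=O\!\left(\frac{n^4\log n}{p_cL_0}\right)=O(n^{3.5}\sqrt{\log n})$, and a final union bound over the $O(\log n)$ phases keeps the overall failure probability $o(1)$. Adding stages (i) and (ii) gives the claimed $O(n^{3.5}\sqrt{\log n})$ bound with probability $1-o(1)$, and the expectation statement follows by the same restart argument as in Theorem~\ref{thm:QDGA-only-mutation}, since the bound holds irrespective of the current map contents. I expect the crossover phase to be the main obstacle: one must certify that the factor-$3/2$ growth leaves $\Omega(L_{j-1})$ valid split points at \emph{every} length in the phase (precisely what Lemma~\ref{lem:floor-and-ceil} delivers) and that the rounded lengths $L_j$ still grow geometrically despite the floors, so that the number of phases stays $O(\log n)$ and the phase costs form a convergent series dominated by the first phase (this geometric growth under rounding is controlled by an estimate in the spirit of Lemma~\ref{lem:change-exponent}). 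The critical point is that the summation must use the increasing $L_{j-1}$ rather than the uniform lower bound $L_0$, otherwise the $\Theta(\log n)$ phases would accrue a spurious additional logarithmic factor.
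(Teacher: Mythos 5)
Your proposal is correct and follows the same overall strategy as the paper's proof: mutation builds all shortest paths up to a cut-off $\Theta(\sqrt{n\log n})$ in $O(n^{3.5}\sqrt{\log n})$ iterations (re-using the argument of Theorem~\ref{thm:QDGA-only-mutation}), and then factor-$3/2$ crossover phases finish the remaining cells, with per-target waiting times dominated by geometric variables of parameter $\Omega(p_c L_{j-1}/n^4)$, tails controlled via Corollary~\ref{cor:geom-tail-bound}, union bounds over cells and phases, and phase costs summing as a geometric series dominated by the first phase --- exactly the paper's computation $\sum_i \xi_i^{-1} = O\bigl((\sqrt{n\log n})^{-1}\bigr)$. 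The one place you genuinely deviate is the phase invariant. The paper defines phases through \emph{representatives} (for each cell, a maximum-cardinality shortest path) and must then argue, via a splicing contradiction, that the two halves of a target's representative have the same cardinality as the representatives of their own cells, so that those cells are already optimised when needed. You instead use the existence-based invariant that every cell admitting \emph{some} shortest path of cardinality at most $L_{j-1}$ is optimised, under which the half-cells satisfy the invariant trivially; the price is that those cells may hold arbitrary equal-weight shortest paths, so you must certify that their concatenation is a simple path, which you do by invoking strict positivity of the weights. This is a reasonable trade: your version dispenses with the representative machinery entirely, while the paper's version hides the same weight assumption inside the claim that crossover creates ``an equivalently optimal solution'' (the stored sub-paths need not coincide with the sub-paths of the representative, so the paper implicitly needs the positivity argument too). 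Your closing caveats --- that the split-point count needs Lemma~\ref{lem:floor-and-ceil}, that the floors must not destroy geometric growth of the thresholds (Lemma~\ref{lem:change-exponent}), and that the phase sum must use the increasing $L_{j-1}$ rather than the uniform bound $L_0$ to avoid a spurious $\log n$ factor --- are precisely the points the paper's final estimate takes care of.
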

\begin{proof}
    When a cell is optimised, it can admit multiple shortest paths of various
    cardinalities but with the same optimised weight. After that Algorithm~\ref{alg:Update} can still 
    update the cell with these equivalently optimal solutions. To argue about 
    the optimisation time, we therefore consider for each cell one of its shortest paths with the largest 
    cardinality as the \emph{representative} optimal solution (representative for short) of the cell.
    
    Let $m:= \lfloor{\sqrt{n \log(n)}}\rfloor$. 
    We divide the run into $\lfloor \log_{3/2} (\ell/m) \rfloor + 1 \leq \lfloor \log_{3/2} (n/m)\rfloor + 1$ 
    phases and a phase $i$ for $i \in [0, \lfloor \log_{3/2} (n/m)\rfloor]$ ends when all cells with 
    representatives of cardinality at most $\lfloor m \cdot (3/2)^i \rfloor$ have been optimised. 
    By Theorem~\ref{thm:QDGA-only-mutation} and given $1-p_c$ is a constant,  we see that every cell 
    $M_{st}$ with representative of cardinality at most $m$ contains 
    one of its shortest path 
    in time $O(n^{3.5}\sqrt{\log(n)})$ with probability $1-o(1)$ and in expectation. Therefore, phase~0 
    with the goal of optimising cells with representatives of cardinality at most $m$ 
    relies on mutation steps, 
    lasts for at most $O(n^{3.5}\sqrt{\log(n)})=:\tau_0$ iterations and only has failure probability of $o(1)$.
    
    Consider a \emph{further phase} $1+i$ for $i \geq 0$, and we rely on crossover steps. During such a phase, 
    all cells with representatives of cardinality at most $\lfloor(m \cdot 3/2)^i\rfloor$ have been optimised 
    while the optimisation of those with cardinality 
      $k \in [\lfloor m \cdot (3/2)^i\rfloor + 1, \lfloor m \cdot (3/2)^{i+1}\rfloor]$ is underway, and we refer 
    to the latter cells as \emph{target} cells. 
    For a target cell, assume that $I=(v_1,\ldots,v_{k+1})$ is its representative solution, then for 
    any integer $j \in [k+1-\lfloor m \cdot (3/2)^i \rfloor,\lfloor m \cdot (3/2)^i\rfloor +1]$ the paths 
    $I_{v_1 v_j}=(v_1,\ldots,v_j)$ and $I_{v_j v_{k+1}}=(v_{j},\ldots,v_{k+1})$ 
    are optimal solutions for cells $M_{v_1 v_j}$ and $M_{v_j v_{k+1}}$ respectively. 
    Furthermore, those paths have the same cardinalities as the corresponding representatives 
    of those cells, because
    otherwise (if there exists a shortest path with larger cardinality then) $I$ is not 
    the representative of $M_{v_1 v_{k+1}}$ and this contradicts our assumption.
    These imply that $M_{v_1 v_j}$ and $M_{v_j v_{k+1}}$ have already been 
    optimised since the cardinalities of $I_{v_1 v_j}$ and $I_{v_j v_{k+1}}$ 
    are at most $\lfloor m \cdot (3/2)^i \rfloor$. Thus picking solutions in those cells as parents and then applying crossover, \ie with probability 
    $p_c/(n^2(n-1)^2) \geq p_c/(n^4),$
    optimise the cell $M_{v_1 v_{k+1}}$ by creating either solution $I$ or 
    an equivalently optimal solution. The number of such pairs of cells (or parents) 
    is the number of possible integers $j$ which is at least 
  \begin{align*}
    &\lfloor m \cdot (3/2)^i \rfloor +1 -(k+1-\lfloor m \cdot (3/2)^i\rfloor)  + 1\\
    &= 2\lfloor{m \cdot (3/2)^i}\rfloor - k + 1\\
    &\geq 2 \lfloor{m \cdot (3/2)^i}\rfloor  - \lfloor m \cdot (3/2)^{i+1}  \rfloor + 1=:\xi_i.
  \end{align*}
Therefore, the number of iterations to optimise an arbitrary target cell
in phase $i+1$ is stochastically dominated by a geometric random variable 
with parameter 
$
p_c \xi_i/ n^4 =: p_i$.

Applying Corollary~\ref{cor:geom-tail-bound}
with $c=9$ for this variable implies that with probability at most $n^{-9/4}$, 
an arbitrary target cell is not optimised after $\tau_i := (9\ln{n}+1) n^4/(p_c \xi_i)$ 
iterations. By a union bound on at most $n(n-1)$ target cells, the probability 
that the phase is not finished after $\tau_i$ iterations is at most 
$n(n-1) n^{-9/4} \leq n^{-1/4}$. Then by a union bound on at most 
  $\lfloor \log_{3/2} (n/m)\rfloor=:q+1$
further phases the probability that all further phases are not finished after 
$
\tau := \sum_{i=0}^{q} \tau_i
$
iterations is at most $n^{-1/4}\log_{3/2} (n/m) 
=o(1)$. Thus combing with phase~$0$, with probability $1-2\cdot o(1)$, all cells 
are optimised in time $\tau_0+\tau$,
and if this does not happen, we can repeat the argument, thus the expected
running time of the algorithm to complete all phases is at most $(1+o(1))(\tau_0+\tau)$.

Note that $\tau_0$ is already $O(n^{3.5}\sqrt{\log{n}})$ and that $\tau = 
O(n^4 \log{n})\sum_{i=0}^{q} \xi_i^{-1}$ since 
$p_c$ is constant, thus it suffices to show that 
$\sum_{i=0}^{q} \xi_i^{-1} = O((\sqrt{n \log(n)})^{-1})$ 
to complete the proof.
We have for $n$ sufficiently large and due to $x-1 < \lfloor{x}\rfloor \leq x$
that 
\begin{align*}
\sum_{i=0}^{q} \xi_i^{-1} 
&= \sum_{i=0}^{q} \frac{1}{2\lfloor{m \cdot (3/2)^i}\rfloor - \lfloor{m \cdot (3/2)^{i+1}}\rfloor + 1}\\
&\leq \sum_{i=0}^{q} \frac{1}{2 (m \cdot (3/2)^i - 1) - m \cdot (3/2)^{i+1} + 1}\\
&= \sum_{i=0}^{q} \frac{1}{2 m \cdot (3/2)^i - 3m/2 \cdot (3/2)^{i}-1}\\
&= \sum_{i=0}^{q} \frac{1}{m/2 \cdot (3/2)^i-1}
 = \sum_{i=0}^{q} \frac{2}{m \cdot (3/2)^i-2}\\
&\leq \sum_{i=0}^{q} \frac{2}{m \cdot (3/2)^i-2 \cdot (3/2)^i} \\
&= \left(\frac{2}{m-2}\right) \sum_{i=0}^{q} \frac{1}{(3/2)^i} \leq \left(\frac{2}{m-2}\right) \sum_{i=0}^{\infty} \left(\frac{2}{3}\right)^i\\
&= O\left(m^{-1}\right) = O\left((\sqrt{n \log(n)})^{-1}\right).
       & \qedhere
\end{align*}
\end{proof}

The previous upper bound can be improved using the refined analysis carried out in~\citep{DoerrT09} which sets the cut-off length for mutation only to  $m=(n \log n)^{1/4}$ and shows that longer paths are constructed by a combination of crossover and mutation. This leads to the following improved upper bound for \qdga.

\begin{theorem}\label{thm:QDGA-mixing-mutation+crosover-improved}
The optimisation time of \qdga with constant $p_c \in (0,1)$ is $O(n^{3.25} \sqrt[4]{\log{n}})$ in expectation and with probability $1-o(1)$. 
\end{theorem}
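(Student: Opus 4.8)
The plan is to mirror the phase structure of the proof of Theorem~\ref{thm:QDGA-mixing-mutation+crosover}, but to rebalance the two contributions to the running time by lowering the mutation cut-off to $m := (n\log n)^{1/4}$, following the refined analysis of \citep{DoerrT09}. The first step is to dispose of the short representatives by mutation only. Since $m \geq \log n$ for $n$ large enough (as $n^{1/4} \geq (\log n)^{3/4}$ eventually, so $\max\{m,\log n\}=m$), Theorem~\ref{thm:QDGA-only-mutation} with $\ell$ replaced by $m$ and the crude bound $\Delta \leq n-1$ shows that every cell whose representative has cardinality at most $m$ is optimised within $O(n^2\Delta\max\{m,\log n\}) = O(n^3 m) = O(n^{3.25}\sqrt[4]{\log n})$ iterations, both with probability $1-o(1)$ and in expectation. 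This is the analogue of phase~$0$ and already matches the claimed bound; the work is to show that the remaining, longer cells are optimised within the same budget.

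The essential difference from Theorem~\ref{thm:QDGA-mixing-mutation+crosover} is that a pure crossover cascade started from the smaller cut-off $m=(n\log n)^{1/4}$ is too slow: the shortest crossover-built representatives, of cardinality just above $m$, admit only $O(m)$ split points, so the first further phase alone would cost of order $n^4\log n/m = n^{3.75}(\log n)^{3/4}$, exceeding the budget. Following \citep{DoerrT09}, I would instead construct the longer representatives by \emph{interleaving} crossover with mutation rather than relying on crossover alone: a target representative is assembled from two already-optimised subpaths joined by crossover and completed by a bounded number of single elementary mutation steps, so that the scarce split points of the crossover are supplemented by the up to $2\Delta$ edge choices offered by each mutation. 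I would organise these constructions into geometric phases of ratio $3/2$, phase $i$ completing all representatives of cardinality at most $\lfloor m(3/2)^i\rfloor$, and lower-bound the number of good construction events for a target in phase $i$ by an expression of the form $2\lfloor m(3/2)^i\rfloor-\lfloor m(3/2)^{i+1}\rfloor+1$. This is exactly where Lemma~\ref{lem:floor-and-ceil} and Lemma~\ref{lem:change-exponent} enter: they lower-bound such a count by a geometrically growing sequence (of ratio $4/3$), which lets one bound the reciprocal sum $\sum_i\xi_i^{-1}$ over phases by a convergent geometric series and thereby keep the accumulated cost of all further phases telescoping to within $O(n^{3.25}\sqrt[4]{\log n})$ once the reduced cut-off $m$ is substituted.

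Finally I would assemble the high-probability statement exactly as in Theorem~\ref{thm:QDGA-mixing-mutation+crosover}: Corollary~\ref{cor:geom-tail-bound} with a suitable constant $c$ controls the failure probability of optimising a single target cell in a phase, a union bound over the at most $n(n-1)$ target cells and the $O(\log n)$ phases keeps the total failure probability $o(1)$, and a restart argument converts the high-probability guarantee into the same bound in expectation. The main obstacle is the middle step: making rigorous how the mutation steps, interleaved with crossover, supply enough additional good construction events to compensate for the few split points available just above the cut-off $m$, and checking that this refinement of \citep{DoerrT09} transfers to the \qdga{} map model, where empty cells play the role of not-yet-constructed individuals and both parents are drawn uniformly from the whole map. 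The combinatorial identities in Lemmas~\ref{lem:floor-and-ceil} and~\ref{lem:change-exponent} are tailored precisely to close the resulting phase sums, so the crux is to verify that the counting argument of the cited work yields those identities in the present setting; the remaining calculations are routine adaptations of the proof of Theorem~\ref{thm:QDGA-mixing-mutation+crosover}.
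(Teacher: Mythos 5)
Your phase~0 matches the paper exactly: mutation only up to the reduced cut-off $m=(n\log n)^{1/4}$, bounded via Theorem~\ref{thm:QDGA-only-mutation} with $\Delta\leq n-1$, giving $O(n^3m)=O(n^{3.25}\sqrt[4]{\log n})$. You also correctly diagnose the central obstacle --- a pure crossover cascade started at this smaller cut-off costs $\Theta(n^4\log n/m)=n^{3.75}(\log n)^{3/4}$ in its first phase alone --- and correctly name the cure from \citep{DoerrT09}: building long representatives by crossover \emph{combined with} mutation. But your concrete plan then contradicts your own diagnosis. You propose to lower-bound the number of good construction events in phase $i$ by $2\lfloor m(3/2)^i\rfloor-\lfloor m(3/2)^{i+1}\rfloor+1$, closed off with Lemmas~\ref{lem:floor-and-ceil} and~\ref{lem:change-exponent}. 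That is precisely the pure-crossover split-point count $\xi_i$ from the proof of Theorem~\ref{thm:QDGA-mixing-mutation+crosover} (and those two lemmas are the tools of the $O(n^3\log n)$ analysis with the \emph{improved} crossover operator, not of this theorem). With that count, the phase-$i$ duration is $\Theta(n^4\log n/\xi_i)$ and the sum over phases is dominated by its first term $\Theta(n^4\log n/m)$ --- exactly the $n^{3.75}(\log n)^{3/4}$ barrier you identified. Bounding $\sum_i\xi_i^{-1}$ by a geometric series cannot repair this, because the total is governed by the largest summand $\xi_0^{-1}\approx 2/m$, and no rearrangement of the reciprocal sum changes that.

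The missing quantitative ingredient is the content of Lemma 3(c) in \citep{DoerrT09}, which the paper's (sketch) proof simply invokes: the time to pass from all shortest paths of length at most $(3/2)^i m$ to all of length at most $(3/2)^{i+1}m$ is $O(n^{3.25}\sqrt[4]{\log n})\cdot(3/2)^{-i/3}$ with probability $1-O(n^{-c})$, so that the phase costs themselves decay geometrically and sum to $O(n^{3.25}\sqrt[4]{\log n})$ via $\sum_{i}(3/2)^{-i/3}\leq 1/(1-(2/3)^{1/3})\leq 8$. To obtain such a decaying per-phase bound one must let the count of good events \emph{grow} beyond $\xi_i$: allowing a crossover that produces a slightly shorter optimal subpath followed by up to $r$ single-edge mutation extensions enlarges the set of usable split points by $\Theta(r)$, at the price of $r$ sequential mutation successes of probability $\Theta(1/(n^2\Delta))$ each; optimising this trade-off per phase is what yields the $(3/2)^{-i/3}$ factor and the exponent $3.25$. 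Your sketch gestures at this interleaving but never derives a per-phase bound from it --- instead you fall back on the static count $\xi_i$, under which the claimed bound is false. Also note that the per-cell failure probability in each phase then comes from this multi-step construction (as in \citep{DoerrT09}), not from Corollary~\ref{cor:geom-tail-bound} applied to a geometric variable with parameter $p_c\xi_i/n^4$; your union-bound and restart scaffolding is fine once the correct per-phase bound is in place.
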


\begin{proof}[Sketch of proof]
Using previous arguments for mutation only, in time $O(n^{3.25} \sqrt[4]{\log{n}})$ all shortest paths up to length $m=(n \log n)^{1/4}$ are constructed with probability $1-o(1)$. Using Lemma 3(c) in \citep{DoerrT09}, the time to obtain from a map containing all shortest path up to length $(3/2)^i \cdot m$ a map containing all shortest paths up to length $(3/2)^{i+1} \cdot m$ by a combination of crossover and mutation is  $O(n^{3.25} \sqrt[4]{\log{n}})\cdot (3/2)^{-i/3}$ with probability $1-O(n^{-c})$ for a positive constant $c$. Summing up over at most $\log_{3/2}(n)$ values of $i$, we get
$$O(n^{3.25} \sqrt[4]{\log{n}}) \sum_{i=1}^{\log_{3/2}(n)} (3/2)^{-i/3} = O(n^{3.25} \sqrt[4]{\log{n}})
$$
(bounding the geometric series by the constant $1/(1- ((2/3)^{1/3}))\leq 8$) with probability $1-o(1)$. The statement on the expectation follows as each phase of length $O(n^{3.25} \sqrt[4]{\log{n}})$ is successful with probability $1-o(1)$ and an expected number of at most $2$ phases is required.
\end{proof}
It should be noted that the bound given in Theorem~\ref{thm:QDGA-mixing-mutation+crosover-improved} is tight and a matching lower bound can be obtained by considering the graph $G_n$ considered in Theorem~4 in \citep{DoerrT09}.

\section{Speed-ups through Improved Operators}
We now consider improvements through feasible parent selection for crossover 
%
as inspired by \citet{DBLP:journals/tcs/DoerrJKNT13}.
In this selection, the first parent is selected from a cell of $M$ chosen 
uniformly at random. Assume this selection is successful, and this first chosen cell 
is $M_{st}$ then the second parent is selected uniformly at random from a cell in 
the row $t$ but excluding column $s$ (and of course excluding column $t$). 
There is a chance that this selection fails, \ie an empty cell 
in one of those steps, 
in this case no crossover occurs and
the algorithm skips directly to the next iteration.

\begin{theorem}\label{thm:QDGA-only-new-crossover}
The optimisation time of QD-GA 
with $p_c=\Omega(1)$
and using the above improved crossover operator 
is $O (n^3\log{n})$ 
in expectation and with probability $1-o(1)$. 
\end{theorem}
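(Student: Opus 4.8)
The plan is to reuse the phase-based crossover analysis of Theorem~\ref{thm:QDGA-mixing-mutation+crosover} almost verbatim, including the notion of the \emph{representative} of a cell (a shortest path of maximum cardinality among those of optimal weight) and the partition of the run into phases along a geometric scale of cardinalities with ratio $3/2$, but with two changes that produce the improved bound. First, the improved operator fixes the endpoint $t$ of the first parent and then draws the second parent only from row $t$, which contains $n-2$ admissible cells instead of all $n(n-1)$ cells; hence a fixed compatible pair $(M_{s v_j}, M_{v_j v})$ is selected with probability $p_c\cdot\frac{1}{n(n-1)}\cdot\frac{1}{n-2}=\Theta(p_c/n^3)$, saving a factor of $n$ over the generic crossover. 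Second, since $M$ is initialised with every edge, all cells whose representative has cardinality $1$ are already optimised at the start, so crossover alone suffices (mutation, if $p_c<1$, can only help) and no mutation phase is needed; this removes any dependence on $\Delta$ and $\ell$ and lets me take the cut-off to be a constant, \ie $m=1$.

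For the induction I would fix a phase whose target cells are those with representative cardinality in the range $(\lfloor m(3/2)^i\rfloor,\lfloor m(3/2)^{i+1}\rfloor]$ and, exactly as in Theorem~\ref{thm:QDGA-mixing-mutation+crosover}, argue that for a target cell $M_{sv}$ with representative $I=(v_1,\dots,v_{k+1})$ and any split index $j\in[k+1-\lfloor m(3/2)^i\rfloor,\lfloor m(3/2)^i\rfloor+1]$, the sub-paths $I_{v_1 v_j}$ and $I_{v_j v_{k+1}}$ are shortest paths whose cells have representatives of cardinality exactly $j-1$ and $k+1-j$ (otherwise $I$ would not be a maximum-cardinality shortest $s$--$v$ path); these cells are therefore optimised by the end of the previous phase. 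Concatenating their current contents gives a path from $s$ to $v$ of weight $d(s,v_j)+d(v_j,v)=d(s,v)$, \ie either $I$ or an equivalently optimal shortest $s$--$v$ path, which optimises $M_{sv}$. The number of admissible $j$ is at least $\xi_i:=2\lfloor m(3/2)^i\rfloor-\lfloor m(3/2)^{i+1}\rfloor+1$ as in Theorem~\ref{thm:QDGA-mixing-mutation+crosover}, and the corresponding parent selections are pairwise disjoint, so one step optimises $M_{sv}$ with probability at least $p_i:=p_c\,\xi_i/(n(n-1)(n-2))$.

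The remaining calculation then mirrors the earlier proof. By Corollary~\ref{cor:geom-tail-bound} with $c=9$, a fixed target cell is optimised within $\tau_i:=(9\ln n+1)/p_i$ steps with probability at least $1-n^{-9/4}$; a union bound over the at most $n(n-1)$ target cells bounds each phase's failure probability by $n^{-1/4}$, and a union bound over the $O(\log n)$ phases keeps the overall failure probability at $o(1)$. The total number of crossover steps is $\sum_i\tau_i=\frac{(9\ln n+1)\,n(n-1)(n-2)}{p_c}\sum_i\xi_i^{-1}$, so it suffices to show $\sum_i\xi_i^{-1}=O(1)$. Since $p_c=\Omega(1)$, this yields $O(n^3\log n)$ steps with probability $1-o(1)$, and as the bound is independent of the initial state a standard restart argument gives the same bound in expectation.

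The main obstacle is making the analysis work at constant $m$, where the floors in $\xi_i$ are of the same order as $\xi_i$ itself for small $i$ and can no longer be ignored as they effectively were for the large cut-off $m=\lfloor\sqrt{n\log n}\rfloor$ in Theorem~\ref{thm:QDGA-mixing-mutation+crosover}. Here I would invoke Lemma~\ref{lem:floor-and-ceil} to get $\xi_i\geq\lceil m(3/2)^i/2\rceil-1$ and Lemma~\ref{lem:change-exponent} to conclude $\xi_i\geq (4/3)^i/2$ for $i\geq5$, whence $\sum_i\xi_i^{-1}$ is dominated by a convergent geometric series (the finitely many small-$i$ terms each being $O(1)$), giving $\sum_i\xi_i^{-1}=O(1)$ as required. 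A secondary point to verify, though a natural one, is that restricting the second parent to row $t$ discards none of the $\xi_i$ compatible splits --- a split of $I$ at $v_j$ is realised precisely by a first parent ending at $t=v_j$ and a second parent in row $v_j$ --- while shrinking the second-parent sample space from $\Theta(n^2)$ to $\Theta(n)$, which is exactly what turns the $n^4$ of Theorem~\ref{thm:QDGA-mixing-mutation+crosover} into $n^3$.
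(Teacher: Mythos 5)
Your proposal is correct and follows the paper's own proof essentially step for step: the same representative notion and $(3/2)^i$-phase structure anchored at the edge-initialised cells, the same per-pair selection probability $p_c/(n(n-1)(n-2))$ and count $\xi_i = 2\lfloor (3/2)^i\rfloor - \lfloor (3/2)^{i+1}\rfloor + 1$ of compatible splits, the same application of Corollary~\ref{cor:geom-tail-bound} with $c=9$ followed by union bounds over cells and phases, and the same use of Lemmas~\ref{lem:floor-and-ceil} and~\ref{lem:change-exponent} to show $\sum_i \xi_i^{-1} = O(1)$. No substantive differences to report.
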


\begin{proof}
We use the same notion of representative of a cell as in the proof of Theorem~\ref{thm:QDGA-mixing-mutation+crosover}. 
The run of the algorithm is divided into $\lfloor \log_{3/2} \ell \rfloor \leq \lfloor \log_{3/2} n\rfloor$ 
phases and a phase $i$ ends when all cells with representative solutions of cardinality 
at most $\lfloor (3/2)^i \rfloor$ have been optimised. Note that phase $1$ is completed 
at initialisation, thus we only look at phases $i+1$ for $i\geq 1$. During such a phase, 
all cells with representatives of cardinality at most $\lfloor(3/2)^i\rfloor$ have been optimised 
while the optimisation of those with cardinality 
$k \in [\lfloor (3/2)^i\rfloor + 1, \lfloor (3/2)^{i+1}\rfloor]$ is underway,
and we refer to the latter cells as \emph{target} cells. 

For a target cell, assume that $I=(v_1,\ldots,v_{k+1})$ is its representative 
solution, 
then for any integer $j\in [k+1-\lfloor(3/2)^i \rfloor,\lfloor(3/2)^i\rfloor +1]$ the paths 
    $I_{v_1 v_j}=(v_1,\ldots,v_j)$ and $I_{v_j v_{k+1}}=(v_{j},\ldots,v_{k+1})$ 
    are optimal solutions for cells $M_{v_1 v_j}$ and $M_{v_j v_{k+1}}$ respectively. 
Furthermore, those paths have the same cardinalities as the corresponding representatives 
of those cells, because
otherwise 
$I$ is not 
the representative of $M_{v_1 v_{k+1}}$ and this contradicts our assumption.
These imply that $M_{v_1 v_j}$ and $M_{v_j v_{k+1}}$ have already been 
optimised since the cardinalities of $I_{v_1 v_j}$ and $I_{v_j v_{k+1}}$ 
are at most $\lfloor (3/2)^i \rfloor$. Thus picking solutions in those cells as 
parents then applying crossover, \ie with probability $p_c/(n(n-1)(n-2))$, 
optimise the cell $M_{v_1 v_{k+1}}$ by creating either solution $I$ or 
an equivalently optimal solution. The number of such pairs of cells (or parents) 
is the number of possible integers $j$ which is 
at least 
\begin{align*}
&\lfloor (3/2)^i \rfloor +1 -(k+1-\lfloor (3/2)^i\rfloor)  + 1\\
    &= 2\lfloor(3/2)^i \rfloor - k + 1\\
    &\geq 2 \lfloor (3/2)^i \rfloor  - \lfloor (3/2)^{i+1}  \rfloor  + 1
     =:\xi_i.
\end{align*}
Therefore, the number of iterations to optimise an arbitrary target cell
in phase $i+1$ is stochastically dominated by a geometric random variable 
with parameter 
$
p_c \xi_i/ n^3 =: p_i$. 

Applying Corollary~\ref{cor:geom-tail-bound}
with $c=9$ for this variable implies that with probability at most $n^{-9/4}$, 
an arbitrary target cell is not optimised after $\tau_i := (9\ln{n}+1) n^3/(p_c \xi_i)$ 
iterations. By a union bound on at most $n(n-1)$ target cells, the probability 
that the phase is not finished after $\tau_i$ iterations is at most 
$n(n-1) n^{-9/4}=n^{-1/4}$. Then by a union bound on at most $\lfloor \log_{3/2} n\rfloor$
phases the probability that all phases are not finished after 
$
\tau := \sum_{i=1}^{\lfloor\log_{3/2} n\rfloor} \tau_i
$
iterations is at most $n^{-1/4}\log_{3/2} n 
=o(1)$. Thus with probability $1-o(1)$, all cells are optimised in time $\tau$,
and if this does not happen, we can repeat the argument, thus the expected
running time of the algorithm is at most $(1+o(1))\tau$. Note that 
\[
\tau = O(n^3 \log{n})\sum_{i=1}^{\lfloor \log_{3/2} n\rfloor} \xi_i^{-1}
\]
since $p_c=\Omega(1)$
thus it suffices 
to show that $\sum_{i=1}^{\lfloor \log_{3/2} n\rfloor} \xi_i^{-1} = O(1)$ to complete the proof. 
It is easy to check that $\xi_i \geq 1$ for $i\in[1,4]$, thus we separate the sum 
by the first 
four summands and for the rest we estimate $\xi_i$ by 
Lemmas~\ref{lem:floor-and-ceil} and \ref{lem:change-exponent} with $x=(3/2)^i$:
\begin{align*}
\sum_{i=1}^{\lfloor \log_{3/2} n\rfloor} \xi_i^{-1}
   & \leq 4 + \sum_{i=5}^{\lfloor \log_{3/2} n\rfloor} \xi_i^{-1}\\
   & \leq 4 + \sum_{i=5}^{\lfloor \log_{3/2} n\rfloor}\frac{1}{\frac{1}{2}\cdot \left(\frac{3}{2}\right)^i-1}  &\text{by Lemma~\ref{lem:floor-and-ceil}}\\ 
   & \leq 4 + \sum_{i=0}^{\infty}\frac{2}{\left(\frac{4}{3}\right)^i}  &\text{by Lemma~\ref{lem:change-exponent}}\\
   & = 4 + 2/(1-3/4) = O(1).
       & \qedhere
\end{align*}
\end{proof}

Combining the Theorems~\ref{thm:QDGA-only-mutation} and \ref{thm:QDGA-only-new-crossover} gives the following result for the so-called fast QD-APSP algorithm.

\begin{theorem}\label{thm:QDGA-fast}
QD-GA
  with 
  constant $p_c \in(0,1)$
  and with the improved crossover operator, this setting is referred as the fast QD-APSP algorithm, 
%
%
has optimisation time $O(\min\{\Delta n^2 \cdot \max\{\ell, \log n\}, n^3 \log n\})$ on the APSP in expectation and with probability $1-o(1)$.
\end{theorem}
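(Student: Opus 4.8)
The plan is to observe that, for a constant crossover probability $p_c \in (0,1)$, the fast QD-APSP algorithm simultaneously meets the hypotheses of both Theorem~\ref{thm:QDGA-only-mutation} and Theorem~\ref{thm:QDGA-only-new-crossover}; each theorem then supplies one of the two terms in the minimum, and it remains only to argue that both bounds apply to the \emph{same} run and to take the smaller one.

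First I would note that a constant $p_c \in (0,1)$ gives both $1 - p_c = \Omega(1)$ and $p_c = \Omega(1)$, so the condition $p_c = 1 - \Omega(1)$ required by Theorem~\ref{thm:QDGA-only-mutation} and the condition $p_c = \Omega(1)$ required by Theorem~\ref{thm:QDGA-only-new-crossover} are both satisfied. The crucial point to verify is that Theorem~\ref{thm:QDGA-only-mutation} continues to hold when the \emph{improved} crossover operator replaces the standard one. This follows by inspecting its proof: the progress argument there makes headway only through mutation steps performing a single valid elementary operation that extends a shortest path by one correct edge, and crossover steps are simply not counted as progress. Since mutation is still applied in each iteration with probability $1 - p_c = \Omega(1)$, and since the elitist update procedure of Algorithm~\ref{alg:Update} guarantees that an already optimised cell can never be de-optimised by any operator, the geometric-domination argument and hence the bound $O(n^2\Delta\max\{\ell,\log n\})$ go through unchanged irrespective of the crossover operator in use. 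Symmetrically, the proof of Theorem~\ref{thm:QDGA-only-new-crossover} makes progress only through improved crossover steps, treats mutation as non-progress, relies only on $p_c = \Omega(1)$, and likewise exploits the monotonicity of the map; its bound $O(n^3\log n)$ therefore also holds for the fast QD-APSP algorithm.

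Finally I would combine the two estimates. Writing $T_A = O(n^2\Delta\max\{\ell,\log n\})$ and $T_B = O(n^3\log n)$, the optimisation time $T$ satisfies $T \le T_A$ with probability $1 - o(1)$ and $T \le T_B$ with probability $1 - o(1)$ by the two theorems. For any fixed graph one of $T_A$, $T_B$ is the asymptotically smaller, and the corresponding theorem yields $T = O(\min\{T_A,T_B\})$ with probability $1 - o(1)$; the same pair of estimates gives $\mathrm{E}[T] = O(T_A)$ and $\mathrm{E}[T] = O(T_B)$, hence $\mathrm{E}[T] = O(\min\{T_A,T_B\})$. This is exactly the claimed bound $O(\min\{\Delta n^2 \max\{\ell,\log n\},\, n^3\log n\})$. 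I expect the only real obstacle to be the verification in the second paragraph that the mutation-based analysis is insensitive to the choice of crossover operator; the rest is routine bookkeeping over the two high-probability events.
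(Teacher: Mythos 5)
Your proposal is correct and matches the paper's approach exactly: the paper gives no separate proof for this theorem beyond the remark that it follows by ``combining'' Theorems~\ref{thm:QDGA-only-mutation} and \ref{thm:QDGA-only-new-crossover}, which is precisely your argument. Your explicit verification that the mutation-based analysis of Theorem~\ref{thm:QDGA-only-mutation} is insensitive to the choice of crossover operator (since crossover steps are never counted as progress and the elitist update prevents de-optimisation) is the one point the paper leaves implicit, and you handle it correctly.
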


Fast QD-APSP takes the advantage of both operators, for example on 
balanced $k$-ary trees where $k=O(1)$ and $\ell=O(\log{n})$, the expected running time of the algorithm is only $O(n^2\log{n})$. 
However,
there also exist graphs 
where a tight optimisation time $\Omega(n^3\log{n})$ is required with high probability and also in expectation, 
even on a simple weight function that only assigns unit weights.

\begin{theorem}\label{thm:QDGA-fast-worst-case}
    There is an instance where the Fast QD-APSP algorithm 
    requires optimisation time $\Omega(n^3 \log n)$ with probability $1-o(1)$ and in expectation.
\end{theorem}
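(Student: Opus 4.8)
The plan is to exhibit a single unit-weight instance on which both terms inside the $\min$ of Theorem~\ref{thm:QDGA-fast} are $\Omega(n^3\log n)$, and on which a coupon-collector effect over $\Theta(n)$ \emph{hard} cells forces this runtime from below. I would take $G_n=(V,E)$ with $V=\{v_1,\dots,v_n\}$, forward edges $(v_i,v_{i+1})$ for $1\le i\le n-1$, and \emph{all} backward edges $(v_j,v_i)$ for $i<j$, each of unit weight. This graph is strongly connected, every node has total degree $n\pm O(1)$ so $\Delta=\Theta(n)$, and for any upward pair $(v_i,v_j)$ with $i<j$ only the forward edges increase the index, so the shortest path is the \emph{unique} path $v_i\to v_{i+1}\to\cdots\to v_j$ of length $j-i$; in particular $\ell=n-1=\Theta(n)$, while every downward pair has distance $1$. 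Consequently $\Delta n^2\max\{\ell,\log n\}=\Theta(n^4)$ and $n^3\log n=o(n^4)$, so the upper bound of Theorem~\ref{thm:QDGA-fast} is $O(n^3\log n)$ and a matching lower bound would make the instance tight.

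Next I would single out the $N:=n-2$ hard cells $M_{v_i v_{i+2}}$, $i\in[1,n-2]$, whose unique optimal solution is the length-$2$ path $v_i\to v_{i+1}\to v_{i+2}$ (uniqueness of $v_{i+1}$ as intermediate node follows from the structure of $G_n$, since any other out-neighbour of $v_i$ has index below $i+2$). The key quantitative step is to show that, conditioned on the history, the probability that a fixed currently-unfilled hard cell is optimised in one iteration is at most $q:=C/n^3$ for a constant $C$. For crossover this is immediate: the only admissible parent pair is $M_{v_i v_{i+1}},M_{v_{i+1}v_{i+2}}$ (both single edges, hence fixed from initialisation and never changing), giving probability exactly $p_c/(n(n-1)(n-2))$. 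For mutation I would enumerate the routes producing exactly this path: a single elementary operation can only reach it by extending $M_{v_i v_{i+1}}$ or $M_{v_{i+1}v_{i+2}}$, or by shortening the unique length-$3$ shortest paths stored in $M_{v_i v_{i+3}}$ or $M_{v_{i-1}v_{i+2}}$; since every relevant edge set $E_{\cdot}$ has size $\Theta(n)$, each such route costs $O(1/(n^2\cdot n))=O(1/n^3)$, and multi-operation mutations contribute only lower-order terms. I expect this enumeration---in particular arguing that $\Delta=\Theta(n)$ forces the edge-choice probability down to $\Theta(1/n)$ on \emph{every} route, including multi-step ones---to be the main obstacle.

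Finally I would convert the per-cell bound into the $\log n$ factor via a coupon-collector argument. Since each iteration produces one offspring and hence optimises at most one cell, the number $U_t$ of still-unfilled hard cells decreases by at most one per step, and by the above it decreases with conditional probability at most $U_t\cdot q$. Thus $U_t$ stochastically dominates the occupancy process of a coupon collector with $N=\Theta(n)$ coupons and per-coupon rate $q=\Theta(1/n^3)$, whose completion time is $\Theta((\ln N)/q)=\Theta(n^3\log n)$ and, by the standard lower-tail concentration of the coupon collector, is at least $(1-\varepsilon)(\ln N)/q=\Omega(n^3\log n)$ with probability $1-o(1)$. Hence the Fast QD-APSP algorithm does not optimise all cells before $\Omega(n^3\log n)$ iterations with probability $1-o(1)$, and since the optimisation time is nonnegative this high-probability bound immediately yields the same $\Omega(n^3\log n)$ lower bound in expectation.
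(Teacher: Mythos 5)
You follow the same skeleton as the paper's proof of Theorem~\ref{thm:QDGA-fast-worst-case}: isolate $\Theta(n)$ hard cells whose unique shortest path has exactly two edges, cap the per-iteration probability of optimising any single one of them by $O(1/n^3)$ uniformly over histories, and finish with a coupon-collector lower bound plus the trivial conversion to expectation. Your instance, however, is genuinely different: the paper glues $n$ disjoint two-edge paths $P_i=(u_i,c_i,v_i)$ onto a $K_{n,n}$ hub and a centre vertex $c$ (so $\ell=O(1)$, but the endpoints $u_i,v_i$ have degree $2$), whereas your chain-plus-backward-edges graph has $\ell=\Theta(n)$ and \emph{every} vertex of degree $\Theta(n)$. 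Both choices make both terms of the $\min$ in Theorem~\ref{thm:QDGA-fast} be $\Omega(n^3\log n)$, so either would witness tightness.

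The step you flag as the main obstacle --- controlling multi-operation mutations --- is indeed where all the difficulty sits, and your instance is well chosen for it. Three structural facts close the gap: (i) every simple path between an upward pair in your graph is the forward path, so an upward cell can only ever contain its unique shortest path; (ii) every downward cell is initialised with a weight-$1$ edge and is never overwritten, so the set of individuals that can ever appear in the map is tiny and history-independent (this is also the missing justification for your four-cell enumeration at $j=1$); (iii) every elementary operation, on every storable individual, has probability $O(1/n)$ because all degrees are $\Theta(n)$. Hence a $j$-operation route costs $O(1/n^2)\cdot O(n^{-j})$ per toggle sequence, the number of storable parent cells within $j$ operations of the target is $O(j^2)$ plus $O(n)$ downward cells needing $j\ge 3$, and the factorially decaying Poisson weights make the total $O(1/n^3)$. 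I would stress that property (iii) is not cosmetic: the paper's own instance violates it, and its one-sentence dismissal of indirect routes (``require at least making the above correct elementary operation in the first place'') does not cover the resulting effect. Concretely, in the paper's graph the cell $M_{v_jv_i}$ ($j\neq i$) has unique shortest path $v_j\to c\to u_i\to c_i\to v_i$, which contains $P_i$ and whose two endpoints have degree $2$; an elementary operation on it chooses among only \emph{four} edges, so once such cells fill (which happens for $\Theta(n)$ values of $j$ within expected $O(n^3)$ iterations, via a short chain of crossovers starting from initialised cells), two-operation mutations recreate $P_i$ with total probability $\Theta(1/n^2)$ per iteration, in conflict with the claimed $1/(25n^3)$ cap. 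Your route excludes this failure mode by construction, which is a genuine advantage over the paper's argument, not merely a cosmetic variation.

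One shared soft spot remains: your final step (``stochastic domination by a coupon collector''), like the paper's $1-(1-n^{-1/5})^n$ computation, multiplies per-cell survival probabilities as if the $\Theta(n)$ hard cells were independent, although the process is adaptive. Since at most one cell can be optimised per iteration and each unoptimised cell succeeds with conditional probability at most $q$ given any history, the bound $\Pr(\text{all hit by }\tau)\le\bigl(1-(1-q)^{\tau}\bigr)^{n-2}$ can be justified, but it needs an explicit argument in your write-up just as it would in the paper's.
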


\begin{proof}
Let $n\geq 2$ be any natural number, we consider the graph $G=(V,E)$ with $|V|=5n+1$
which consists of:
\begin{itemize}
    \item $n$ paths $P_i=(u_i,c_i,v_i)$ of cardinality $2$ where $i\in[n]$,
    \item a complete bipartite graph $K_{n,n}$ with two disjoint sets of nodes $A,B$ where 
    $|A|=|B|=n$, and for each pair $(a,b)\in A \times B$, we have $(a,b)\in E$ and $(b,a)\in E$,
    \item a node $c$, and the remaining edges of $E$ are: $(c,u_i)$ and 
    $(v_i,c)$ for each $i \in[n]$; $(c_i,a)$, $(b,c_i)$ for all $i\in[n]$, 
    all $a\in A$, and all $b\in B$,
\end{itemize} 
and 
the unit weight function $w\colon E\rightarrow \{1\}$. 
Figure~\ref{fig:hard-for-fast-qdga} shows an example of this graph for $n=2$. 
It is easy to see 
    that $G$ is strongly connected, 
    and that 
    its diameter 
    is $\ell=5=O(1)$ (\eg the cardinality of the shortest path from node $u_i$ to $v_j$ where $i\neq j$).
The maximum degree is 
        $\Delta
            = 3n
            = \Theta(|V|)$
    which corresponds to the degree of a node in $K_{2n}$,
    but 
    node $c_i$ of any path $P_i$ also has degree $2n+2=\Theta(|V|)$.
\begin{figure}[h]\centering

\begin{tikzpicture}[x=3em,y=3em,scale=0.9]
    \tikzstyle{vertex}=[draw,thick,fill=white,circle,minimum size=1.8em,align=center];
    \tikzstyle{edge}=[draw,black,thick,-stealth];    

    \fill[gray,rounded corners=5] (0.4,2.45) rectangle (2.6,-0.45);
    \node[white] at (2,2) {$P_1$};    
    \node[vertex] at (1,2) (u1) {$u_1$};
    \node[vertex] at (2,1) (c1) {$c_1$};
    \node[vertex] at (1,0) (v1) {$v_1$};
    \path[edge,white] (u1) -- (c1);
    \path[edge,white] (c1) -- (v1);

    \fill[gray,rounded corners=5] (0.4,5.45) rectangle (2.6,2.55);
    \node[white] at (2,5) {$P_2$};    
    \node[vertex] at (1,5) (u2) {$u_2$};
    \node[vertex] at (2,4) (c2) {$c_2$};   
    \node[vertex] at (1,3) (v2) {$v_2$};
    \path[edge,white] (u2) -- (c2);
    \path[edge,white] (c2) -- (v2);
    
    \node[vertex] at (-1,2.5) (c) {$c$};
    \path[edge] (c) -- (u1);
    \path[edge] (c) -- (u2);
    \path[edge] (v1) -- (c);
    \path[edge] (v2) -- (c);

    \fill[gray,rounded corners=5] (3.4,4.1) rectangle (6.6,0.9);
    \node[black] at (5,4.6) {$K_{2,2}$};
    \fill[gray!50,rounded corners=5] (3.6,3.9) rectangle (6.4,3.1);
    \node[black] at (6.8,3.5) {$A$};
    \fill[gray!50,rounded corners=5] (3.6,1.9) rectangle (6.4,1.1);
    \node[black] at (6.8,1.5) {$B$};
    \node[vertex] at (4,3.5) (a1) {$a_1$};
    \node[vertex] at (4,1.5) (b1) {$b_1$};    
    \node[vertex] at (6,3.5) (a2) {$a_2$};
    \node[vertex] at (6,1.5) (b2) {$b_2$};    
    \path[edge] (a1.-80) -- (b1.80);
    \path[edge] (b1.100) -- (a1.-100);
    \path[edge] (a2.-80) -- (b2.80);
    \path[edge] (b2.100) -- (a2.-100);    
    \path[edge] (a1.-35) -- (b2.125);
    \path[edge] (b2.145) -- (a1.-55);
    \path[edge] (a2.-145) -- (b1.55);
    \path[edge] (b1.35) -- (a2.-125);

    \path[edge] (c1) -- (a1);
    \path[edge] (c1) -- (a2);
    \path[edge] (c2) edge[bend left] (a1);
    \path[edge] (c2) edge[bend left] (a2);
    \path[edge] (b1) edge[bend left] (c1);
    \path[edge] (b2) edge[bend left] (c1);
    \path[edge] (b1) -- (c2);
    \path[edge] (b2) -- (c2);

\end{tikzpicture}
\caption{Graph $G$ for $n=2$, 
all edges have weight $1$.}
\label{fig:hard-for-fast-qdga}
\end{figure}
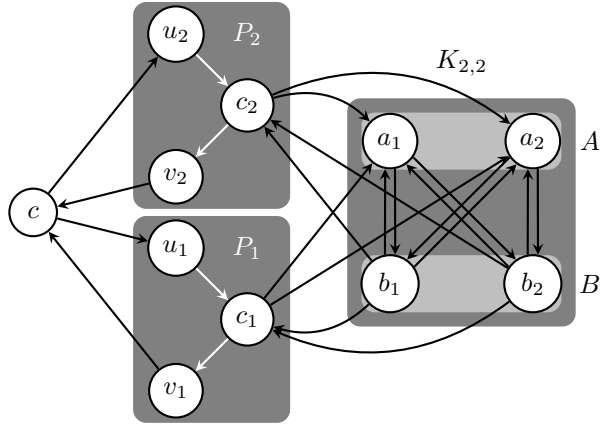

The unique shortest path between $u_i$ and $v_i$ is $P_i$, therefore we argue 
that with high probability Fast QD-APSP requires 
    $\Omega(|V|^3 \log|V|)=\Omega(n^3\log{n})$ 
iterations to optimise the $n$ cells $M_{u_i v_i}$ where $i\in[n]$. After
initialization, none of these cells are optimised since $P_i$ has cardinality $2$. 
Then in every iteration, in order to optimise a cell $M_{u_i v_i}$ by crossover
cells $M_{u_i c_i}$ and $M_{c_i v_i}$ must be selected as parents, and this
occurs with probability $p_c/((5n+1)5n(5n-1))$. If mutation is used instead,
\ie with probability $1-p_c$, either $M_{u_i c_i}$ and $M_{c_i v_i}$ cells 
need to be selected as the parent, \ie with probability $2/((5n+1)5n)$, then
one proper edge needs to be selected among $2n+2$ of those that are connected 
to $c_i$ in an elementary operation to optimise $M_{u_i v_i}$, \ie probability 
at most $1/(2n+2)$. 
Note that it is possible to optimise $M_{u_i v_i}$ by first 
optimising some cell with a shortest path of larger cardinality and later mutating 
it back to $M_{u_i v_i}$. However, 
both shortening a shortest path that contains $P_i$ as a sub-path (\eg converting path $(u_i,c_i,v_i,c)$ to $P_i$)
and mutating a shortest path that contains an edge of $P_i$ (\eg converting path $(u_i,c_i,a)$ to $P_i$) 
require at least 
making the above correct elementary operation in the first place thus we can ignore these events.
So overall, the probability of optimising $M_{u_i v_i}$ by either crossover
or mutation is at most 
$$2/((5n+1)5n(2n+2))\leq 1/(25n^3).$$
This upper bound 
holds independently of $i$ as far as the cell $M_{u_i v_i}$ has not yet been 
optimised. 

Consider now the first $\tau:=(1/5)(25n^3-1)\ln{n}$ iterations, the 
probability that an arbitrary cell $M_{u_i v_i}$ is not optimised during these 
iterations is at least, here using $(1-1/x)^{x-1}\geq 1/e$ for $x=25n^3$,
\[
\left(1-\frac{1}{25n^3}\right)^{(1/5)(25n^3-1)\ln{n}}
  \geq n^{-1/5}.
\]
Therefore, the probability that at least a cell among the $n$ cells $M_{u_i v_i}$ 
is not optimised after $\tau=\Omega(n^3\log{n})$ iterations is at least
\[
1-(1-n^{-1/5})^{n} 
    \geq 1 - e^{-n^{-1/5}\cdot n}
    = 1-o(1).
\]
This implies that Fast QD-APSP requires $\Omega(n^3\log{n})$ fitness evaluations
to optimise $G$ with probability $1-o(1)$. This also implies that the expectation is in $\Omega(n^3 \log n)$.
\end{proof}

\section{Conclusions}
Computing diverse sets of high quality solutions is important in various areas of artificial intelligence.
Quality diversity algorithms have received a lot of attention in recent years due to their ability of tackling problems from a wide range of domains. We contributed to the theoretical understanding of these algorithms by providing the first analysis of a classical combinatorial optimisation problem that seeks multiple solutions in a natural behaviour space. Our analysis for the APSP points out the working behaviour of Map-Elites for this problems and shows that these algorithms can provably solve it efficiently. Afterwards, we presented speed up techniques that provide significantly better upper bounds than the standard Map-Elites approach. 

Establishing a rigorous theoretical foundation and obtaining a better understanding of QD has the potential to improve practical applications across various domains.
We hope that this work serves as a stepping stone towards the development of more efficient quality diversity algorithms and provides a basis for understanding QD algorithms for more complex planning problems.

\section*{Acknowledgments}

This work has been supported by the Australian Research Council (ARC) through grants DP190103894 and FT200100536.


\begin{thebibliography}{}

\bibitem[Baste et~al., 2022]{DBLP:journals/ai/BasteFJMOPR22}
Baste, J., Fellows, M.~R., Jaffke, L., Masar{\'{\i}}k, T., de~Oliveira~Oliveira, M., Philip, G., and Rosamond, F.~A. (2022).
\newblock Diversity of solutions: {A}n exploration through the lens of fixed-parameter tractability theory.
\newblock {\em Artificial Intelligence}, 303:103644.

\bibitem[Baste et~al., 2019]{DBLP:journals/algorithms/BasteJMPR19}
Baste, J., Jaffke, L., Masar{\'{\i}}k, T., Philip, G., and Rote, G. (2019).
\newblock {FPT} algorithms for diverse collections of hitting sets.
\newblock {\em Algorithms}, 12(12):254.

\bibitem[Bossek et~al., 2021]{DBLP:conf/gecco/BossekN021}
Bossek, J., Neumann, A., and Neumann, F. (2021).
\newblock Breeding diverse packings for the knapsack problem by means of diversity-tailored evolutionary algorithms.
\newblock In {\em Genetic and Evolutionary Computation Conference, {GECCO} 2021:}, pages 556--564. {ACM}.

\bibitem[Bossek and Neumann, 2021]{DBLP:conf/gecco/Bossek021}
Bossek, J. and Neumann, F. (2021).
\newblock Evolutionary diversity optimization and the minimum spanning tree problem.
\newblock In {\em Genetic and Evolutionary Computation Conference, {GECCO} 2021}, pages 198--206. {ACM}.

\bibitem[Bossek and Neumann, 2022]{DBLP:conf/gecco/Bossek022}
Bossek, J. and Neumann, F. (2022).
\newblock Exploring the feature space of {TSP} instances using quality diversity.
\newblock In {\em Genetic and Evolutionary Computation Conference, {GECCO} 2022:}, pages 186--194. {ACM}.

\bibitem[Bossek and Sudholt, 2024]{Bossek2024}
Bossek, J. and Sudholt, D. (2024).
\newblock Runtime analysis of quality diversity algorithms.
\newblock {\em Algorithmica}.
\newblock To appear.

\bibitem[Corus and Lehre, 2018]{Corus2018}
Corus, D. and Lehre, P.~K. (2018).
\newblock Theory driven design of efficient genetic algorithms for a classical graph problem.
\newblock In {\em Recent Developments in Metaheuristics}, pages 125--140. Springer International Publishing.

\bibitem[Cully and Demiris, 2018]{DBLP:journals/tec/CullyD18}
Cully, A. and Demiris, Y. (2018).
\newblock Quality and diversity optimization: {A} unifying modular framework.
\newblock {\em {IEEE} Transactions on Evolutionary Computation}, 22(2):245--259.

\bibitem[Do et~al., 2020]{DBLP:conf/gecco/DoBN020}
Do, A.~V., Bossek, J., Neumann, A., and Neumann, F. (2020).
\newblock Evolving diverse sets of tours for the travelling salesperson problem.
\newblock In {\em Genetic and Evolutionary Computation Conference, {GECCO} 2020}, pages 681--689. {ACM}.

\bibitem[Doerr et~al., 2012]{DBLP:journals/tcs/DoerrHK12}
Doerr, B., Happ, E., and Klein, C. (2012).
\newblock Crossover can provably be useful in evolutionary computation.
\newblock {\em Theoretical Computer Science}, 425:17--33.

\bibitem[Doerr et~al., 2013]{DBLP:journals/tcs/DoerrJKNT13}
Doerr, B., Johannsen, D., K{\"{o}}tzing, T., Neumann, F., and Theile, M. (2013).
\newblock More effective crossover operators for the all-pairs shortest path problem.
\newblock {\em Theoretical Computer Science}, 471:12--26.

\bibitem[Doerr and Theile, 2009]{DoerrT09}
Doerr, B. and Theile, M. (2009).
\newblock Improved analysis methods for crossover-based algorithms.
\newblock In {\em Genetic and Evolutionary Computation Conference, {GECCO} 2009}, pages 247--254. {ACM}.

\bibitem[Floyd, 1962]{10.1145/367766.368168}
Floyd, R.~W. (1962).
\newblock Algorithm 97: {S}hortest path.
\newblock {\em Communications of the ACM (CACM)}, 5(6):345.

\bibitem[Fomin et~al., 2024]{DBLP:journals/algorithmica/FominGJPS24}
Fomin, F.~V., Golovach, P.~A., Jaffke, L., Philip, G., and Sagunov, D. (2024).
\newblock Diverse pairs of matchings.
\newblock {\em Algorithmica}, 86(6):2026--2040.

\bibitem[Fontaine et~al., 2021]{DBLP:conf/aaai/FontaineLKMTHN21}
Fontaine, M.~C., Liu, R., Khalifa, A., Modi, J., Togelius, J., Hoover, A.~K., and Nikolaidis, S. (2021).
\newblock Illuminating mario scenes in the latent space of a generative adversarial network.
\newblock In {\em Thirty-Fifth {AAAI} Conference on Artificial Intelligence, {AAAI} 2021}, pages 5922--5930. {AAAI} Press.

\bibitem[Fontaine et~al., 2020]{DBLP:conf/gecco/FontaineTNH20}
Fontaine, M.~C., Togelius, J., Nikolaidis, S., and Hoover, A.~K. (2020).
\newblock Covariance matrix adaptation for the rapid illumination of behavior space.
\newblock In {\em Genetic and Evolutionary Computation Conference, {GECCO} 2020:}, pages 94--102. {ACM}.

\bibitem[Gao et~al., 2022]{DBLP:conf/latin/00010STTY22}
Gao, J., Goswami, M., {Karthik {C. S.}}, Tsai, M., Tsai, S., and Yang, H. (2022).
\newblock Obtaining approximately optimal and diverse solutions via dispersion.
\newblock In {\em {LATIN} 2022: Theoretical Informatics - 15th Latin American Symposium}, volume 13568 of {\em LNCS}, pages 222--239. Springer.

\bibitem[Gao et~al., 2021]{DBLP:journals/ec/GaoNN21}
Gao, W., Nallaperuma, S., and Neumann, F. (2021).
\newblock Feature-based diversity optimization for problem instance classification.
\newblock {\em Evolutionary Computation}, 29(1):107--128.

\bibitem[Gao and Neumann, 2014]{DBLP:conf/gecco/GaoN14}
Gao, W. and Neumann, F. (2014).
\newblock Runtime analysis for maximizing population diversity in single-objective optimization.
\newblock In {\em Genetic and Evolutionary Computation Conference, {GECCO} 2014}, pages 777--784. {ACM}.

\bibitem[Gravina et~al., 2018]{DBLP:conf/gecco/GravinaLY18}
Gravina, D., Liapis, A., and Yannakakis, G.~N. (2018).
\newblock Fusing novelty and surprise for evolving robot morphologies.
\newblock In {\em Genetic and Evolutionary Computation Conference, {GECCO} 2018}, pages 93--100. {ACM}.

\bibitem[Hagg et~al., 2018]{DBLP:conf/ppsn/HaggAB18}
Hagg, A., Asteroth, A., and B{\"{a}}ck, T. (2018).
\newblock Prototype discovery using quality-diversity.
\newblock In {\em Parallel Problem Solving from Nature - {PPSN} {XV} - 15th International Conference 2018, Proceedings, Part {I}}, volume 11101 of {\em LNCS}, pages 500--511. Springer.

\bibitem[Hanaka et~al., 2023]{DBLP:conf/aaai/HanakaK0KKO23}
Hanaka, T., Kiyomi, M., Kobayashi, Y., Kobayashi, Y., Kurita, K., and Otachi, Y. (2023).
\newblock A framework to design approximation algorithms for finding diverse solutions in combinatorial problems.
\newblock In {\em Thirty-Seventh {AAAI} Conference on Artificial Intelligence, {AAAI} 2023}, pages 3968--3976. {AAAI} Press.

\bibitem[Hanaka et~al., 2021]{DBLP:conf/aaai/HanakaKKO21}
Hanaka, T., Kobayashi, Y., Kurita, K., and Otachi, Y. (2021).
\newblock Finding diverse trees, paths, and more.
\newblock In {\em Thirty-Fifth {AAAI} Conference on Artificial Intelligence, {AAAI} 2021}, pages 3778--3786. {AAAI} Press.

\bibitem[Harder et~al., 2024]{PPSN2024_EDO_MMP}
Harder, J.~G., Neumann, A., and Neumann, F. (2024).
\newblock Analysis of evolutionary diversity optimisation for the maximum matching problem.
\newblock In {\em Parallel Problem Solving from Nature - {PPSN} {XVIII} - 18th International Conference 2024, Proceedings}, LNCS.
\newblock To appear.

\bibitem[Ingmar et~al., 2020]{DBLP:conf/aaai/IngmarBST20}
Ingmar, L., de~la Banda, M.~G., Stuckey, P.~J., and Tack, G. (2020).
\newblock Modelling diversity of solutions.
\newblock In {\em The Thirty-Fourth {AAAI} Conference on Artificial Intelligence, {AAAI} 2020}, pages 1528--1535. {AAAI} Press.

\bibitem[Johnson, 1977]{10.1145/321992.321993}
Johnson, D.~B. (1977).
\newblock Efficient algorithms for shortest paths in sparse networks.
\newblock {\em Journal of the ACM}, 24(1):1–13.

\bibitem[Lehman and Stanley, 2011]{DBLP:conf/gecco/LehmanS11}
Lehman, J. and Stanley, K.~O. (2011).
\newblock Evolving a diversity of virtual creatures through novelty search and local competition.
\newblock In {\em Genetic and Evolutionary Computation Conference, {GECCO} 2011}, pages 211--218. {ACM}.

\bibitem[Medina et~al., 2023]{DBLP:conf/gecco/MedinaRMS23}
Medina, A., Richey, M., Mueller, M., and Schrum, J. (2023).
\newblock Evolving flying machines in minecraft using quality diversity.
\newblock In {\em Genetic and Evolutionary Computation Conference, {GECCO} 2023}, pages 1418--1426. {ACM}.

\bibitem[Miao et~al., 2022]{DBLP:conf/icra/MiaoZSZZHYW22}
Miao, J., Zhou, T., Shao, K., Zhou, M., Zhang, W., Hao, J., Yu, Y., and Wang, J. (2022).
\newblock Promoting quality and diversity in population-based reinforcement learning via hierarchical trajectory space exploration.
\newblock In {\em 2022 International Conference on Robotics and Automation, {ICRA} 2022}, pages 7544--7550. {IEEE}.

\bibitem[Mouret and Clune, 2015]{DBLP:journals/corr/MouretC15}
Mouret, J. and Clune, J. (2015).
\newblock Illuminating search spaces by mapping elites.
\newblock {\em CoRR}, abs/1504.04909.

\bibitem[Neumann et~al., 2021]{DBLP:conf/gecco/NeumannB021}
Neumann, A., Bossek, J., and Neumann, F. (2021).
\newblock Diversifying greedy sampling and evolutionary diversity optimisation for constrained monotone submodular functions.
\newblock In {\em Genetic and Evolutionary Computation Conference, {GECCO} 2021:}, pages 261--269. {ACM}.

\bibitem[Neumann et~al., 2018]{DBLP:conf/gecco/NeumannGDN018}
Neumann, A., Gao, W., Doerr, C., Neumann, F., and Wagner, M. (2018).
\newblock Discrepancy-based evolutionary diversity optimization.
\newblock In {\em Genetic and Evolutionary Computation Conference, {GECCO} 2018}, pages 991--998. {ACM}.

\bibitem[Neumann et~al., 2019]{DBLP:conf/gecco/NeumannG0019}
Neumann, A., Gao, W., Wagner, M., and Neumann, F. (2019).
\newblock Evolutionary diversity optimization using multi-objective indicators.
\newblock In {\em Genetic and Evolutionary Computation Conference, {GECCO} 2019}, pages 837--845. {ACM}.

\bibitem[Neumann et~al., 2023]{DBLP:conf/gecco/NeumannGYSCG023}
Neumann, A., Gounder, S., Yan, X., Sherman, G., Campbell, B., Guo, M., and Neumann, F. (2023).
\newblock Diversity optimization for the detection and concealment of spatially defined communication networks.
\newblock In {\em Genetic and Evolutionary Computation Conference, {GECCO} 2023}, pages 1436--1444. {ACM}.

\bibitem[Nikfarjam et~al., 2021]{DBLP:conf/gecco/NikfarjamBN021}
Nikfarjam, A., Bossek, J., Neumann, A., and Neumann, F. (2021).
\newblock Entropy-based evolutionary diversity optimisation for the traveling salesperson problem.
\newblock In {\em Genetic and Evolutionary Computation Conference, {GECCO} 2021}, pages 600--608. {ACM}.

\bibitem[Nikfarjam et~al., 2022]{DBLP:conf/ppsn/NikfarjamDN22}
Nikfarjam, A., Do, A.~V., and Neumann, F. (2022).
\newblock Analysis of quality diversity algorithms for the knapsack problem.
\newblock In {\em Parallel Problem Solving from Nature - {PPSN} {XVII} - 17th International Conference, {PPSN} 2022, Proceedings, Part {II}}, volume 13399 of {\em LNCS}, pages 413--427. Springer.

\bibitem[Nikfarjam et~al., 2024a]{DBLP:journals/telo/NikfarjamNN24}
Nikfarjam, A., Neumann, A., and Neumann, F. (2024a).
\newblock On the use of quality diversity algorithms for the travelling thief problem.
\newblock {\em {ACM} Transactions on Evolutionary Learning and Optimization}, 4(2):12.

\bibitem[Nikfarjam et~al., 2024b]{DBLP:conf/gecco/NikfarjamSND024}
Nikfarjam, A., Stanford, T., Neumann, A., Dumuid, D., and Neumann, F. (2024b).
\newblock Quality diversity approaches for time-use optimisation to improve health outcomes.
\newblock In {\em Genetic and Evolutionary Computation Conference, {GECCO} 2024}, page 1318–1326. {ACM}.

\bibitem[Pugh et~al., 2016]{DBLP:conf/ppsn/PughSS16}
Pugh, J.~K., Soros, L.~B., and Stanley, K.~O. (2016).
\newblock Searching for quality diversity when diversity is unaligned with quality.
\newblock In {\em Parallel Problem Solving from Nature - {PPSN} {XIV} - 14th International Conference 2016, Proceedings}, volume 9921 of {\em LNCS}, pages 880--889. Springer.

\bibitem[Qian et~al., 2024]{DBLP:journals/corr/abs-2401-10539}
Qian, C., Xue, K., and Wang, R. (2024).
\newblock Quality-diversity algorithms can provably be helpful for optimization.
\newblock In {\em Proceedings of the Thirty-Third International Joint Conference on Artificial Intelligence, {IJCAI} 2024}. ijcai.org.

\bibitem[Schmidbauer et~al., 2024]{DBLP:conf/gecco/SchmidbauerOB0S24}
Schmidbauer, M., Opris, A., Bossek, J., Neumann, F., and Sudholt, D. (2024).
\newblock Guiding quality diversity on monotone submodular functions: Customising the feature space by adding boolean conjunctions.
\newblock In {\em Genetic and Evolutionary Computation Conference, {GECCO} 2024}, page 1614–1622. {ACM}.

\bibitem[Shen et~al., 2020]{DBLP:conf/ijcai/ShenZHMCFL20}
Shen, R., Zheng, Y., Hao, J., Meng, Z., Chen, Y., Fan, C., and Liu, Y. (2020).
\newblock Generating behavior-diverse game ais with evolutionary multi-objective deep reinforcement learning.
\newblock In {\em Proceedings of the Twenty-Ninth International Joint Conference on Artificial Intelligence, {IJCAI} 2020}, pages 3371--3377. ijcai.org.

\bibitem[Sudholt and Thyssen, 2012]{ant-sp-journal}
Sudholt, D. and Thyssen, C. (2012).
\newblock Running time analysis of ant colony optimization for shortest path problems.
\newblock {\em Journal of Discrete Algorithms}, 10:165--180.

\bibitem[Von{\'{a}}sek and Saska, 2018]{DBLP:conf/icrai/VonasekS18}
Von{\'{a}}sek, V. and Saska, M. (2018).
\newblock Increasing diversity of solutions in sampling-based path planning.
\newblock In {\em International Conference on Robotics and Artificial Intelligence, {ICRAI} 2018}, pages 97--102. {ACM}.

\bibitem[Witt, 2014]{DBLP:journals/ipl/Witt14}
Witt, C. (2014).
\newblock Fitness levels with tail bounds for the analysis of randomized search heuristics.
\newblock {\em Information Processing Letters}, 114(1-2):38--41.

\bibitem[Zardini et~al., 2021]{DBLP:conf/gecco/ZardiniZZIF21}
Zardini, E., Zappetti, D., Zambrano, D., Iacca, G., and Floreano, D. (2021).
\newblock Seeking quality diversity in evolutionary co-design of morphology and control of soft tensegrity modular robots.
\newblock In {\em Genetic and Evolutionary Computation Conference, {GECCO} 2021:}, pages 189--197. {ACM}.

\end{thebibliography}
\end{document}